\def\eqref#1{equation~\ref{#1}}
\def\1{\bm{1}}
\def\rvvarepsilon{{\mathbf{\varepsilon}}}
\def\vzero{{\bm{0}}}
\def\va{{\bm{a}}}
\def\vb{{\bm{b}}}
\def\vu{{\bm{u}}}
\def\vv{{\bm{v}}}
\def\vw{{\bm{w}}}
\def\vx{{\bm{x}}}
\def\vy{{\bm{y}}}
\def\vz{{\bm{z}}}
\def\mA{{\bm{A}}}
\def\mG{{\bm{G}}}
\def\mH{{\bm{H}}}
\def\mI{{\bm{I}}}
\def\mK{{\bm{K}}}
\def\mL{{\bm{L}}}
\def\mM{{\bm{M}}}
\def\mQ{{\bm{Q}}}
\def\mW{{\bm{W}}}
\def\mX{{\bm{X}}}
\def\mLambda{{\bm{\Lambda}}}
\def\mPi{{\bm{\Pi}}}
\DeclareMathAlphabet{\mathsfit}{\encodingdefault}{\sfdefault}{m}{sl}
\SetMathAlphabet{\mathsfit}{bold}{\encodingdefault}{\sfdefault}{bx}{n}
\def\gN{{\mathcal{N}}}
\def\gR{{\mathcal{R}}}
\newcommand{\E}{\mathbb{E}}
\newcommand{\R}{\mathbb{R}}
\theoremstyle{plain}
\newtheorem{theorem}{Theorem}[section]
\newtheorem{proposition}[theorem]{Proposition}
\newtheorem{lemma}[theorem]{Lemma}
\theoremstyle{definition}
\theoremstyle{remark}
\newcommand{\brac}[1]{\left[#1\right]}
\DeclarePairedDelimiterX{\infdivx}[2]{(}{)}{%
  #1\;\delimsize\|\;#2%
}
\newcommand{\kl}{D_{KL}\infdivx}
\icmltitlerunning{TRAM: Explaining Away Label Noise with Privileged Information}
\begin{document}

\twocolumn[
\icmltitle{Transfer and Marginalize: Explaining Away Label Noise with Privileged Information}



\icmlsetsymbol{equal}{*}

\begin{icmlauthorlist}
\icmlauthor{Mark Collier}{goog}
\icmlauthor{Rodolphe Jenatton}{goog}
\icmlauthor{Efi Kokiopoulou}{goog}
\icmlauthor{Jesse Berent}{goog}
\end{icmlauthorlist}

\icmlaffiliation{goog}{Google AI}

\icmlcorrespondingauthor{Mark Collier}{markcollier@google.com}

\icmlkeywords{Machine Learning, ICML}

\vskip 0.3in
]



\printAffiliationsAndNotice{}  

\begin{abstract}
Supervised learning datasets often have \textit{privileged information}, in the form of features which are available at training time but are not available at test time e.g.\ the ID of the annotator that provided the label. We argue that privileged information is useful for explaining away label noise, thereby reducing the harmful impact of noisy labels. We develop a simple and efficient method for supervised learning with neural networks: it transfers via weight sharing the knowledge learned with privileged information and approximately marginalizes over privileged information at test time. Our method, \textbf{TRAM} (TRansfer and Marginalize), has minimal training time overhead and has the same test-time cost as not using privileged information. TRAM performs strongly on CIFAR-10H, ImageNet and Civil Comments benchmarks.
\end{abstract}

\section{Introduction}
\label{introduction}

Supervised learning problems are typically formalized as learning a conditional distribution $p(y | \vx)$, $y \in \mathcal{Y}$ and $\vx \in \mathcal{X}$ from  $(\vx_{i}, y_{i})$, $i=1,...,N$ pairs. Yet we often have access to additional features $\va \in \mathcal{A}$ at training time that will not be available at test time. These features are known as \textit{privileged information}~\citep{vapnik2009new}, or 
PI
for short.
An example of PI are features describing the human annotator that provided a given label, such as the annotator ID, the length of time to provide the label, the experience of the annotator, etc. Annotators do not always agree on the correct label for a given $\vx$, some annotators may be more reliable than others and the reliability of annotators may depend on the location of $\vx$ in the input domain $\mathcal{X}$ \citep{snow2008cheap,sheng2008get}.

The expanded training dataset consists of $(\vx_{i}, \va_{i}, y_{i})$ triplets. Given that our test-time predictive distribution cannot be conditioned on $\va$, what use is this PI? As a thought experiment, suppose there exists a malicious (or lazy) annotator that provides random labels. It is known that random labels harm the performance of supervised learning models \citep{frenay2013classification,song2020learning,cordeiro2020survey}. If these random labels can be explained away via access to PI, such as the annotator ID, then this harm can be prevented. In particular we can use the PI to explain away noise in the labels which otherwise would be irreducible aleatoric uncertainty.

More formally, suppose the PI $\va$ is predictive of $y$ given $\vx$, in the sense that the conditional mutual information $I(y; \va | \vx)$ is non-zero. Then, the entropy of $y$ is reduced if we condition on \textit{both} $\vx$ and $\va$ rather than $\vx$ alone, as summarised in 
the intuitive Lemma \ref{mi_lemma}.
\begin{lemma}
$\! I(y; \va | \vx) \! > 0 \! \Rightarrow \! H(y | \vx, \va) \! < \! H(y | \vx)$.\label{mi_lemma}
\end{lemma}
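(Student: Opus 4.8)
The plan is to reduce the lemma to the standard identity expressing conditional mutual information as a reduction in conditional entropy,
\[
I(y; \va \mid \vx) = H(y \mid \vx) - H(y \mid \vx, \va),
\]
after which the implication follows in a single line. Given this identity, if $I(y; \va \mid \vx) > 0$ then $H(y \mid \vx) - H(y \mid \vx, \va) > 0$, which rearranges immediately to $H(y \mid \vx, \va) < H(y \mid \vx)$, the desired conclusion. So the entire content of the argument is the identity itself.

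To establish the identity I would start from the definition of the conditional mutual information as an expectation of a Kullback--Leibler divergence,
\[
I(y; \va \mid \vx) = \E_{\vx}\!\left[\kl{p(y, \va \mid \vx)}{p(y \mid \vx)\, p(\va \mid \vx)}\right],
\]
rewrite the ratio inside the logarithm as $p(y \mid \vx, \va) / p(y \mid \vx)$ using the factorization $p(y, \va \mid \vx) = p(y \mid \vx, \va)\, p(\va \mid \vx)$, and split the resulting expectation of $\log p(y \mid \vx, \va) - \log p(y \mid \vx)$ into the two conditional entropy terms. The only bookkeeping to check is that $\E_{\vx, \va, y}[\log p(y \mid \vx)]$ collapses to $-H(y \mid \vx)$ after marginalizing $\va$, which holds because $\log p(y \mid \vx)$ does not depend on $\va$.

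The closest thing to an obstacle is purely a matter of convention rather than mathematics: one must keep the sign and the ordering of the conditioning variables straight, so that the positive quantity $I(y; \va \mid \vx)$ lines up with the entropy \emph{decrease} $H(y \mid \vx) - H(y \mid \vx, \va)$ rather than with its negative. No assumptions beyond the finiteness and well-definedness of the entropies involved are required, and the statement is essentially a specialization of a textbook result to the variables $(y, \va, \vx)$.
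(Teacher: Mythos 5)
Your proof is correct and follows exactly the reasoning the paper implicitly relies on: the paper labels this an ``intuitive'' lemma and offers no separate proof, since it is precisely the textbook identity $I(y;\va\mid\vx) = H(y\mid\vx) - H(y\mid\vx,\va)$ that you derive and then rearrange. Nothing further is needed.
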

In \S \ref{theory}, we make the implication of this lemma crisper for a particular model,
proving that under certain conditions, PI can be leveraged to lower the expected risk for linear regression problems. Additionally, prior work has proven that PI can lead to generalization bounds with better sample complexity \citep{vapnik2009new,lambert2018deep}.

Inspired by prior work and our theoretical analysis of a simple linear model, we focus on exploiting PI in supervised deep neural networks. The production deployment of such models often has tight latency and memory constraints. Hence a number of methods have been developed to utilize PI with the same test time memory and computation cost as networks trained without PI  \citep{yang2017miml,lambert2018deep,lopez2015unifying}. \citet{yang2017miml} uses PI as a form of input-dependent regularizer. \citet{lambert2018deep} train with heteroscedastic Gaussian dropout, with the training-time dropout variance a function of the PI. \citet{lopez2015unifying} distill a network trained with PI into a network without access to $\va$.

Below we develop a method, TRAM, which transfers knowledge via weight sharing from the part of the network trained using PI to the test time network which does not have access to PI. At test time, TRAM makes a simple, efficient approximation to the integral $p(y | \vx) = \int p(y | \vx, \va) p(\va | \vx) d\va$. Making predictions without PI is no more costly than that with a standard network trained without access to PI. Unlike prior work which requires specific, typically architecture-dependent, techniques such as Gaussian Dropout,
we need not constrain the form of the predictors to make the downstream marginalization possible. Implementation and training are simple. 

In summary the paper contributions are the following:
\begin{itemize}
\item To better illustrate when PI is useful, we show analytically that, under certain conditions, PI reduces the expected risk for specific linear regression models.
\item We provide empirical evidence suggesting that the representations learned with access to PI are more robust against label noise.
\item We propose a novel efficient method, TRAM, which exploits PI in supervised deep neural networks and has \textit{zero computational overhead} at prediction time.
\item Empirically, we show that our method performs better than a series of baselines on CIFAR-10H, ImageNet and CivilComments benchmarks.
\end{itemize}

\section{Exploiting Privileged Information}

To build up intuition and to better illustrate situations where PI can be useful, we start with a simple linear model where a formal analysis can be carried out.
Next, we look into non-linear models and provide a motivating experiment suggesting that useful PI can be leveraged in deep networks to improve representation learning.

\subsection{When can PI be helpful? An analysis in a Simple Linear Model}
\label{theory}

We consider the following regression generative model with target $y$
\begin{equation*}
    y = \vx^\top \vw^\star + \va^\top \vv^\star + \varepsilon
\end{equation*}
where $\vx \in \R^d$ and $\va \in \R^m$ correspond to standard and PI features respectively, while $\varepsilon \sim \gN(0, \sigma^2)$ stands for some additive noise. The two unknown parameters $(\vw^\star, \vv^\star)$ establish the relationships between the target $y$ and the features $(\vx, \va)$.
To model the fact that the PI features can themselves depend on the $\vx$---e.g., raters having diverging assessments  on ambiguous input samples---we assume that
$\va \sim p(\va | \vx) = \gN( \mu(\vx) | \Sigma(\vx) )$ for some mean and covariance \textit{dependent on} $\vx$.

Let us assume we have $n$ observations from this generative model represented by $\vy \in \R^n, \mX \in \R^{n \times d}$, $\mA \in \R^{n \times m}$ and $\mu(\mX) \in \R^{n \times m}$. We are interested in comparing different predictors $\tau(\mX)$ that can predict $y$ \textit{only} based on $\mX$, as required in the case of PI.
To compare the predictors, we use the concept of risk $\E_{\varepsilon \sim p(\varepsilon), \va \sim p(\va | \vx)}[\gR(\tau(\mX))]$, formally defined in Appendix~\ref{app:risk_analysis}, to capture the expected error of $\tau$ in predicting $y$; see Section 3.5 in~\citet{bach2021learning} for more background about risk analysis.

We defer to Appendix~\ref{app:risk_analysis} a rigorous exposition of the results and convey instead here some intuitive messages.
We first focus on the comparison between 
\begin{itemize}
    \item \texttt{(NO-PI)} the standard least-square estimate that is given by $\hat{\vw}_0 = (\mX^\top \mX)^{-1} \mX^\top \vy$, ignoring $\mA$, and
    \item \texttt{(PI)} the \textit{joint} least-square estimate defined by $[\hat{\vw}_1; \hat{\vv}_1]=(\mQ^\top \mQ)^{-1} \mQ^\top \vy$ with $\mQ=[\mX, \mA]$ in $\R^{n \times (d+m)}$. At prediction time, if we had access to $(\vx_\text{test}, \va_\text{test})$ for \texttt{(PI)}, we would predict with $\hat{\vw}_1^\top \vx_\text{test} +  \hat{\vv}_1^\top \va_\text{test}$. However, since $\va_\text{test}$ is not available in our context, we use instead its (assumed known) mean $\mu(\vx_\text{test})$, similar to mean imputation~\citep{little2019statistical}.
\end{itemize}
Writing
$\mPi_x$ the orthogonal projector associated with $\mX$, defined in Appendix \ref{app:risk_analysis}, our analysis shows that as long as
\begin{itemize}
    \item \textbf{Variance of PI}: The variance $\{(\vv^\star)^\top \Sigma(\vx_i) \vv^\star\}_{i=1}^n$ due to PI is large enough and/or
    \item \textbf{Non-overlapping PI}: The PI features $\mA$ have a significant average component outside of the subspace spanned by the features $\mX$, i.e., the term below is large enough
\begin{equation}\label{eq:residual_PI_contribution}
    \frac{1}{n} \| (\mI- \mPi_x) \mu(\mX) \vv^\star\|^2,
\end{equation}
\end{itemize}
then the estimator \texttt{(PI)} has a lower risk compared to \texttt{(NO-PI)}. In other words, it is provably better to exploit the PI $\mA$ at training time instead of ignoring it.

Our analysis further covers the case of \texttt{(marg.$\!\!\!$ NO-PI)} where we marginalize $\hat{\vw}_0$ with respect to PI and we predict with $\mX\E_{\va \sim p(\va | \vx)}[\hat{\vw}_0]$, which we compare with \texttt{(marg.$\!\!\!$ PI)} the marginalized predictions $\E_{\va \sim p(\va | \vx)}[\mX \hat{\vw}_1 + \mA \hat{\vv}_1]$. In that case, we can show the same conclusion as previously, with the exception that the variance term $\{(\vv^\star)^\top \Sigma(\vx_i) \vv^\star\}_{i=1}^n$ does not have influence anymore, only~(\ref{eq:residual_PI_contribution}) drives the comparison. Indeed, the proof in Appendix~\ref{app:risk_analysis} shows that marginalizing removes from the risk expressions the terms related to the variance of PI. While the above analysis is conducted in a simplified setup of a linear regression model, the conclusions drawn from the analysis motivate our method and we verify these conclusions hold-up empirically in both small-scale controlled experiments (\S\ref{sec:rep_learning}) and for large-scale neural network classification models trained on large datasets (\S\ref{experiments}).

\subsection{PI helps to learn better representations: A motivating experiment}
\label{sec:rep_learning}

\begin{figure}[t]
\centering
\begin{subfigure}[b]{0.99\linewidth}
  \centering
  \includegraphics[width=0.9\textwidth]{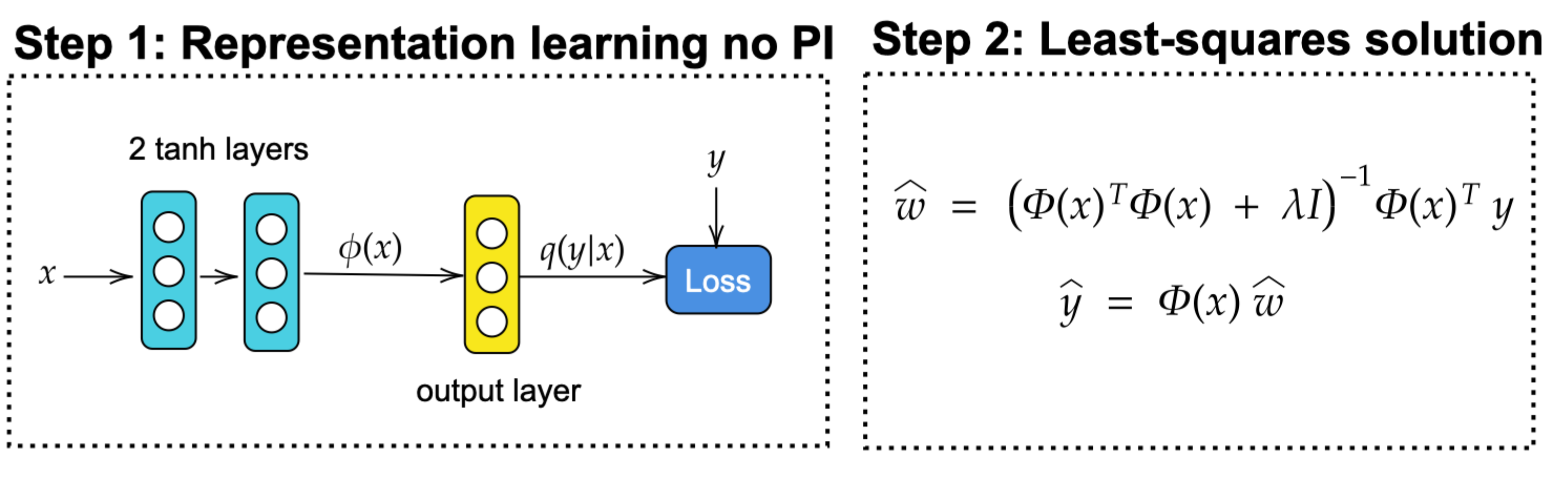}
  \includegraphics[width=0.9\textwidth]{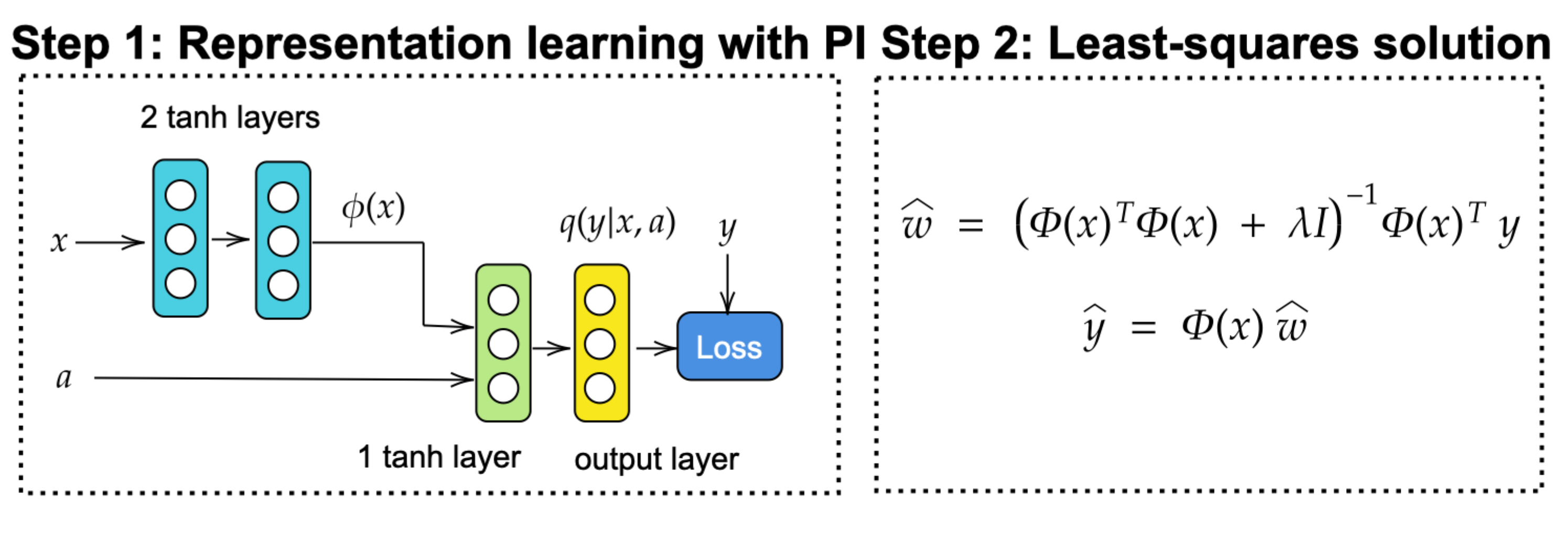}
  \caption{Learning the representation $\phi(x)$ w/o PI (top), w/ PI (bottom).}
  \label{fig:two_steps}
\end{subfigure}
\begin{subfigure}[b]{0.99\linewidth}
  \centering
  \includegraphics[width=0.8\textwidth]{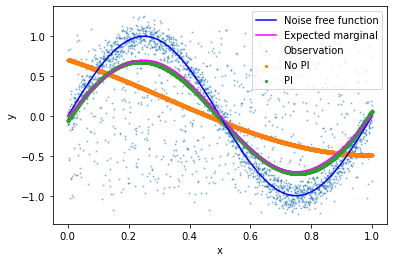}  
  \caption{Representations learned w/ No PI vs.\ w/o PI (regression).}
  \label{fig:representation_learning_regression}
\end{subfigure}
\begin{subfigure}[b]{0.99\linewidth}
  \centering
  \includegraphics[width=0.76\textwidth]{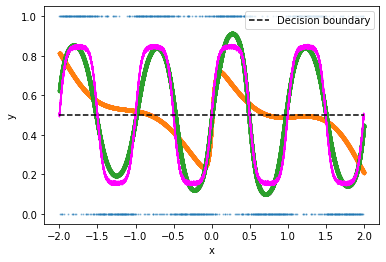}
  \caption{Representations learned w/ No PI vs.\ w/o PI (classification).}
  \label{fig:representation_learning_classification}
\end{subfigure}
\caption{
Synthetic representation-learning experiments.}
\label{fig:representation_learning}
\vspace{-0.1in}
\end{figure}

Our analysis of a linear model has established key insights into the conditions under which PI is provably useful. We now look at synthetic non-linear neural network experiments, which provide empirical evidence that PI can be helpful in the non-linear setting as well. In particular, we show that representations learned with access to PI can explain away label noise and transfer better than representations learned without access to PI. This motivating experiment forms the basis of our TRAM method. First, we present a regression experiment and then extend it to classification.

\textbf{Regression experiment.} We simulate a noisy annotator with PI a binary indicator, $a \sim \rm{Bernoulli}(0.3)$, such that $a = 1$ represents the case where the noisy annotator provides a random label independent of $x$:
\begin{equation}\label{eq:synthetic_linear_regression_model}
    y = (1 - a) \cdot \sin(2 \pi x) + a \cdot v + \epsilon,
\end{equation}
where $x \in [0, 1]$, $v \sim \mathcal{U}(-1, 1)$ and $\epsilon \sim \mathcal{N}(0, 0.1)$.

We then fit two networks to $N=2,500$ training examples generated according to this process. The first network does not have access to PI and is a two-layer MLP, see top left of Fig.~\ref{fig:two_steps} for an illustration and Appendix~\ref{app:hyperparams} for further details. The second network has access to PI, see bottom of Fig.~\ref{fig:two_steps}. The part of the network which learns the $x$ representation, $\phi$, is defined exactly as the no-PI MLP. $q(y | \vx, \va)$, the output head with access to PI, see Fig.~\ref{fig:two_steps}, is a single layer MLP, with the concatenation of $a$ and $\phi(x)$ as inputs; see bottom left of Fig.~\ref{fig:two_steps}.

We then freeze the non-linear representations $\phi(x)$ learned by both networks. In the second step, for the regression task, we fit a linear model based on $(\phi(x_{i}), y_{i})$, $i=1,\ldots,N$. The linear model can be solved exactly using the (L2-regularized) ordinary least squares solution. We plot the results in Fig.~\ref{fig:representation_learning_regression}. We see that the representations learned by the model with access to PI in step \#1 enable a near perfect fit to the true expected marginal distribution, $\mathbb{E}_{(\va, y) \sim p( \va, y | \vx)}[y] $,
over $\mathcal{X}$.
However \textit{without access to PI the noise term $a \cdot v$ cannot be explained away}. As a result, the linear model fit on top of the representations learned without PI is substantially worse than the model fit using the two-step procedure. We emphasize that both models have exactly the same capacity. In Appendix~\ref{app:toy_experiment_vary_eps}, we further perform a sensitivity analysis to the scale of $\epsilon$. As expected, and predicted by our theory, as the magnitude of the noise which is independent of PI grows, the method which has access to PI converges to the no-PI solution.

\textbf{Classification experiment.}
We extend the setting above to a
classification task, squashing the output of~(\ref{eq:synthetic_linear_regression_model}) into a sigmoid. The labels are obtained by thresholding at $0.5$.
The domain $\mathcal{X}$ is set to $[-2, 2]$ to have multiple decision boundaries; see Appendix~\ref{app:data_gen} for more details about the setup.
We see in Fig.~\ref{fig:representation_learning_classification} that the logistic-regression classifier fit on the representations learned with access to PI (green line) matches the oracle classifier (pink line) that marginalizes over the noise sources. The fit is better than when the classifier uses the no-PI representations (orange line). Quantitatively, the PI and NO-PI classifiers respectively match the oracle classifier 96.7\% and 91.0\% of the time.


\textbf{Large-scale experiment.}
 In Appendix~\ref{app:two_step_TRAM} we extend this representation learning procedure to a large-scale image classification case. We learn a representation with and without access to PI on a relabeled version of ImageNet (details in \S \ref{sec:imagenet_experiments}) using a ResNet-50 \citep{he2016deep}. We then freeze the representation and evaluate using a linear model. Access to PI improves the representations learned.


\section{Method: TRAM}
\label{method}

\begin{figure}[t]
\begin{center}
\centerline{\includegraphics[width=1.1\columnwidth]{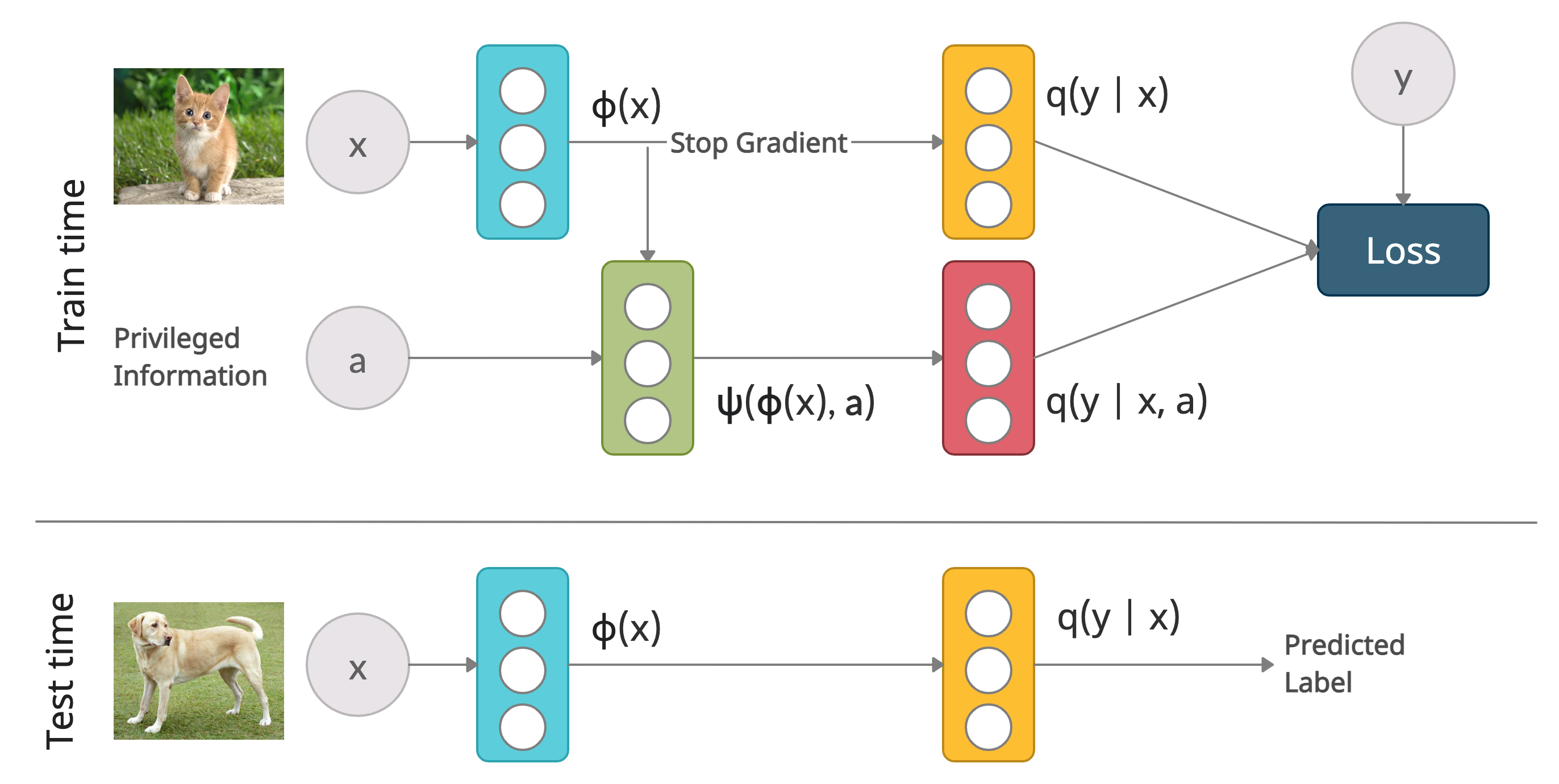}}
\caption{The TRAM method in diagrammatic form. 
}
\label{fig:tram_diaggram}
\end{center}
\vspace{-0.1in}
\end{figure}
\begin{table*}[ht]
{
\caption{Comparison to related work.}
\label{tab:related_work_comparison}
\begin{center}
\begin{sc}
\vspace*{-0.5cm}
\resizebox{\textwidth}{!}{
\begin{tabular}{lccccc}
\toprule
Method & $p(\va| \vx)$ & Training & Test Cost & Weight & Approximate $p(y | \vx)$ \\
 & Required &  &  & Sharing & \\
\midrule
Imputation & $\times$ & 1 model, 1 step & = NO PI & $\checkmark$ & $\times$  \\
Distillation~{\scriptsize\citep{lopez2015unifying}} & $\times$ & 2 models, 2 steps & = NO PI & $\times$ & $\times$  \\
Het.\ Dropout~{\scriptsize\citep{lambert2018deep}} & $\times$ & 1 model, 1 step & = NO PI & $\checkmark$ & $\checkmark$  \\
MIML-FCN+~{\scriptsize\citep{yang2017miml}} & $\times$ & 1 model, 1 step & = NO PI & $\times$ & $\times$  \\
Full marginalization & $\checkmark$ & 1 model, 1 step & $\mathcal{O}(S \times \textrm{NO PI})$  & $\checkmark$ & $\checkmark$ \\
\midrule
TRAM (Ours) & $\times$ & 1 model, 1 step & = NO PI & $\checkmark$ & $\checkmark$ \\
Het-TRAM (Ours) & $\times$ & 1 model, 1 step & = NO PI & $\checkmark$ & $\checkmark$ \\
Distilled-TRAM (Ours) & $\times$ & 2 models, 2 steps & = NO PI & $\checkmark$ & $\checkmark$ \\
\bottomrule
\end{tabular}
}
\end{sc}
\end{center}
}
\end{table*}

We consider learning under privileged information~\citep{vapnik2009new}, \textbf{LUPI}. Our proposed method, TRAM,  consists of a single neural network with two output heads, providing predictions for both $p(y | \vx, \va)$ and $p(y | \vx)$;~see Fig.~\ref{fig:tram_diaggram}.
There are two key ingredients to TRAM; (i) the $p(y | \vx)$ head is a simple, yet a provably valid, approximation to the marginal $\int p(y | \vx, \va) p(\va | \vx) d\va$ and (ii) a partition of the parameter space such that the neural network weights are shared between the two output heads, and that these shared weights are updated solely based on the gradients from the $p(y | \vx, \va)$ head which has access to PI. Below we develop TRAM in the classification setting, in \S \ref{sec:regression} we extend the method to the regression setting.

\subsection{Ingredient \#1: Marginalize over PI at test time}
\label{sec:marginalize}

A natural probabilistic approach to LUPI is (i) to learn the conditional distribution $p(y | \vx, \va)$ during training and (ii) then, at test time, marginalize over the $\mathcal{A}$ domain, computing $p(y | \vx) = \int p(y | \vx, \va) p(\va | \vx) d\va$ \citep{lambert2018deep}.

Making predictions with the marginal $p(y | \vx)$ is motivated by the following observation. Consider the set of distributions $\mathcal{Q}$ over $C$ class labels,
$\mathcal{Q} = \{ q(y| \cdot) | \forall \vx \in \mathcal{X}, q(y|\vx) \in \Delta_C \}$ where $\Delta_C$ is the $C$-dimensional simplex.
Among all the distributions $q \in \mathcal{Q}$, the marginal $\vx \mapsto p(y | \vx)$ minimizes the following optimization problem:
\begin{equation}
\min_{q \in \mathcal{Q}} \mathbb{E}_{(\vx, \va) \sim p(\vx, \va)}\brac{\kl{p(y | \vx, \va)\!}{\!q(y|\vx)}}.
\label{marginal_is_optimal_bis}
\end{equation}
See proof in Appendix~\ref{app:marginal_is_optimal_bis_proof}. In words, $p(y | \vx)$ is optimal in the sense that it minimizes the \textit{expected} KL divergence to $p(y | \vx, \va)$. Note further that the mean imputation scheme which provably reduces the expected risk for a linear regression model, \S \ref{theory}, corresponds precisely to marginalizing over PI at test time when the PI satisfies the assumption that they are distributed Gaussian.

Directly computing $p(y | \vx)$ has two problems;
(i) it is typically intractable and (ii) $p(\va | \vx)$ is unknown and so must be learned, which is a challenging generative modelling problem in itself. A Monte Carlo (MC) estimate of the integral using samples from $\mathcal{A}$ in the training set is only  practical with the independence assumption $p(\va | \vx) = p(\va)$, so that $p(y | \vx)$ reduces to $\int p(y | \vx, \va) p(\va) d\va \approx \frac{1}{S} \sum_{s=1}^S p(y | \vx, \va_{s})$ with $\va_{s} \sim p(\va)$.

Unfortunately this independence assumption is often violated in practice. In addition the memory and computational cost of MC estimation scales linearly in $S$, the number of MC samples. This $\mathcal{O}(S)$ scaling is undesirable for production deployment with strict latency requirements.

Due to the challenge of computing the integral directly, we propose a simple approximation $q(y | \vx; \vw)$ to $p(y | \vx)$. It exploits the property~(\ref{marginal_is_optimal_bis}) of $p(y | \vx)$ as the distribution minimizing the expected KL divergence to its conditional $p(y | \vx, \va)$. We choose $q$ to be cheap to evaluate at test time. For example, for a multi-class vanilla TRAM classifier $q(y | \vx; \vw) = \texttt{softmax}(\mW \phi(\vx))$.

\subsection{Ingredient \#2: Transfer via weight sharing}
\label{sec:transfer_via_weight_sharing}

We partition the parameter space into four disjoint subsets;
\begin{enumerate}
    \item Let $\phi(\vx)$ be a feature extractor for $\vx \in \mathcal{X}$.
    \item Similarly, let $\psi(\phi(\vx), \va)$ be a feature extractor \textit{jointly} applied to $(\phi(\vx), \va)$ for $(\vx, \va)$ in $\mathcal{X} \times\mathcal{A}$.
    \item The weights $\vw$ parameterize the marginal distribution: $q(y | \vx; \vw) = q(y | \phi(\vx); \vw)$.
    \item The weights $\vu$ parameterize the conditional distribution $q(y | \vx, \va; \vu)$, namely we have the equality $q(y | \vx, \va; \vu) = q(y | \psi(\phi(\vx), \va); \vu)$.
\end{enumerate}

\paragraph{Two-step approach.} Motivated by Eq.~(\ref{marginal_is_optimal_bis}), the connection between LUPI and multi-task learning~\citep{Jonschkowski2016patterns} and our synthetic representation learning experiments, \S \ref{sec:rep_learning}, we consider the following two-step approach:
\begin{eqnarray}
\min_{\vu, \phi, \psi} & 
\!\!\!\!\!\!\!\!\!\!\!\!\!\!\!\!\!\!\!\!\!\!\!\!\!\!\!\!
\!\!\!\!\!
\mathbb{E}_{(\vx, \va, y) \sim p(\vx, \va, y)}
\big[\! \  \mathcal{L}_1(y, q(y | \vx, \va)) \big]  \label{eq:step_1} \\
\min_{\vw} & 
\!\!\!\!
\mathbb{E}_{(\vx, \va, y) \sim p(\vx, \va, y)} 
\big[\mathcal{L}_2(y, q(y | \vx)) \big]
\text{ with }
\phi = \phi^\star  \label{eq:step_2}
\end{eqnarray}
$\mathcal{L}_1$ and $\mathcal{L}_2$ are arbitrary loss functions. We assume $\phi$ and $\psi$ are parameterized by neural networks, so $\min_{\phi, \psi}$ refers to optimizing the network weights.

Crucially $\phi^\star$ is the feature extractor learned in (\ref{eq:step_1}) with access to PI. This weight sharing enables \textit{knowledge transfer} to the network trained without PI. Given Eq.~(\ref{marginal_is_optimal_bis}), if we make the standard choice of setting $\mathcal{L}_2$ to be the cross-entropy loss, Eq.~(\ref{eq:step_2}) approximates the true marginal distribution $p(y|\vx)$ (observe that the KL divergence in Eq.~(\ref{marginal_is_optimal_bis}) reduces to the cross-entropy loss function for $\mathcal{L}_2$ when taking the one-hot training labels for $p(y| \vx, \va)$).

\paragraph{Merging the two steps.} To further simplify the above approach, we propose to merge Eq.~(\ref{eq:step_1}) and Eq.~(\ref{eq:step_2}) into a \textit{single} training procedure.
To that end, and reusing the terminology commonly used in deep-learning frameworks, let us define 
$$
\pi(y | \vx; \vw) = q\big( y | \texttt{stop\_gradient}(\phi(\vx)); \vw \big)
$$
which coincides with $q(y|\vx)$ except that its gradient only depends on $\vw$. For some $\beta > 0$, we then consider:
\begin{equation*}
\label{eq:joint_objective}
\min_{\vu, \vw, \phi, \psi}
\!\!
\mathbb{E}_{(\vx, \va, y) \sim p(\vx, \va, y)}
\big[ \mathcal{L}_2(y, \pi(y | \vx)) 
+ \beta \mathcal{L}_1(y, q(y | \vx, \va)) \big]
\end{equation*}
as the joint training objective. In practice, since the parameters of the two losses are partitioned, we can set $\beta=1$ and fold instead the search over $\beta$ into the search of the learning rate, hence not introducing an extra hyperparameter. In Appendix~\ref{app:two_step_TRAM} we show empirically that the one-step and the two-step processes perform equivalently on ImageNet.

\subsection{TRAM variants} \label{sec:heteroscedastic}

Privileged information may only explain away some of label noise uncertainty. Below we propose two TRAM variants which combine TRAM with existing noisy labels methods.

\paragraph{Het-TRAM.} Heteroscedastic classifiers model label noise that is input-dependent and have been successfully applied in large-scale image classification \citep{collier2021correlated}. Of particular relevance for PI, even if the conditional distribution $q(y | \vx, \va)$ is homoscedastic, the marginal $q(y | \vx)$ can become heteroscedastic (see Appendix \ref{app:heteroscedastic} for details).

Hence we propose \textbf{Het-TRAM}, a TRAM variant in which $q(y | \vx)$ is heteroscedastic. This increases the expressiveness of $q$, improving the approximation in the second step of our optimization procedure, Eq.~(\ref{eq:step_2}). We implement the method of \citet{collier2021correlated} to make $q(y | \vx)$ heteroscedastic.

\paragraph{Distilled-TRAM.} Distillation \citep{hinton2015distilling} is a technique for transferring knowledge between two neural networks. Distillation has been previously applied to LUPI \citep{lopez2015unifying}. The teacher and student network for distillation can have the same parameterization \citep{furlanello2018born}.  In Distilled-TRAM, we use the single-step TRAM method, setting the loss function, $\mathcal{L}_1$ in~(\ref{eq:step_1}) to the distillation loss.
The soft labels for the distillation loss come from a teacher network, previously trained with access to PI (hence the 2 steps and models in Table~\ref{tab:related_work_comparison}).

\subsection{Regression}
\label{sec:regression}

We developed TRAM and Het-TRAM focusing on the classification setting but our approach is trivial to generalize to regression problems. In the regression case, we can choose the predictive distribution to be Gaussian, $q(y | \vx) = \mathcal{N}(\mu(\vx), \sigma^2(\vx))$. For vanilla TRAM we can choose $\sigma^2(\vx) = 1$, while for Het-TRAM we can choose $\sigma^2(\vx) = \texttt{softplus}(\vw^{\top}_{\sigma} \phi(\vx))$ so that both $\mu$ and $\sigma^2$ are parameterized by neural networks~\citep{kendall2017uncertainties}. In our case, we use the shared feature extractor $\phi(\vx)$.
$\mathcal{L}_1$ in Eq.~(\ref{eq:step_1}) and $\mathcal{L}_2$ in Eq.~(\ref{eq:step_2}) are replaced by the Gaussian negative log-likelihood. Our small-scale regression experiment, Fig.~\ref{fig:representation_learning_regression}, demonstrates the efficacy of the two-step TRAM method for regression problems.

\section{Related Work}
\label{related_work}

\citet{vapnik2009new} develop a framework for the LUPI paradigm and introduce the SVM+ method for training Support Vector Machines in this regime. The slack variables for the SVM+ constraints are a function of the PI. SVM+ has been extended in the SVM literature \citep{lapin2014learning,vapnik2015learning,wu2021lr}.
\citet{Jonschkowski2016patterns} provide a unifying framework that connects together multi-task learning, multi-view learning and LUPI.

\citet{yang2017miml} extend the SVM+ approach to neural network models with their MIML-FCN+ method. The authors formulate a two-tower network similar to ours, but without weight sharing between the towers. Both towers make independent predictions given $\vx$ or $\va$ as inputs. The tower with access to PI predicts the loss of the other tower and this prediction is regularized to be close to the true loss. In this way the PI tower outputs a neural network analogue to the SVM+ slack variables.

\citet{lambert2018deep} utilize PI by making the training-time Gaussian-dropout variance \citep{kingma2015variational} a function of the PI. At test time the PI is approximately marginalized over by removing the dropout. Similarly \citet{hernandez2014mind} allow the additive Gaussian noise component of a heteroscedastic Gaussian Process Classifier \citep{rasmussen2006gaussian} to be a function of the PI. The classifier is homoscedastic at test time.

\citet{lopez2015unifying} propose a distillation \citep{hinton2015distilling} style approach to learning with PI. The teacher network is trained with access to PI. In the distillation step the student network is given $\vx$ as input and a convex combination of soft labels from the teacher network and true labels $y$ as targets. \citet{xu2020privileged} extend and apply this distillation method to a recommender system.

TRAM implements knowledge transfer via weight sharing, performs efficient approximate marginalization at test time and can be applied to many widely used architectures. \citet{lambert2018deep} also share weights and approximate the marginal $p(y | \vx)$ however they require the use of Gaussian dropout, which is not widely used. The distillation and MIML-FCN+ methods do not transfer via weight sharing and do not approximate $p(y | \vx)$. Distillation also requires a two-step training procedure. See Table~\ref{tab:related_work_comparison} for a comparison of the key features of selected LUPI methods.

\section{Experiments}
\label{experiments}

Our experiments tackle the general LUPI problem. There are a few large-scale public datasets with PI. We thus use both real-world datasets with PI as well as synthesizing PI for a re-labelled version of ImageNet \citep{deng2009imagenet}.

We evaluate a number of baselines in addition to our method.
\begin{itemize}
    \item The ``No PI'' baseline is standard neural network training which directly learns $p(y | \vx)$ and never uses PI.
    \item Zero and mean imputation learn $p(y | \vx, \va)$ at training time and substitute $\va = \mathbf{0}$ and $\va = \frac{1}{N} \sum_i \va_i$ respectively at test time. For mean imputation, averaging takes place after feature pre-processing, e.g., one-hot encoding of the annotator ID.
    \item  The ``Full marginalization'' baseline is an expensive MC estimate of $p(y | \vx) = \int p(y | \vx, \va) p(\va | \vx) d\va$ at test time, see \S \ref{sec:marginalize} for details.  It is a gold standard (up to independence assumption error), impractical to compute in many applications.
    \item We also compare to distillation based approaches. ``Distillation No PI'' is an ablation of the effect of distillation alone, independent of PI, in which a network trained \textit{without} access to PI is distilled into another network \textit{also without} access to PI~\citep{furlanello2018born}.
\end{itemize}
 Prior 
 work did not evaluate against these imputation baselines or full marginalization \citep{lopez2015unifying, yang2017miml, lambert2018deep}, which we found to be remarkably competitive despite their simplicity.

\subsection{CIFAR-10H}

\npdecimalsign{.}
\npfourdigitnosep
\nprounddigits{3}
\begin{table}[t]
\caption{CIFAR-10
neg.~log-likelihood \& accuracy (trained on CIFAR-10H). Averaged over 20
runs $\pm$ 1 std.~deviation.}
\label{cifar_10h_results}
\begin{center}
\begin{small}
\begin{sc}
\begin{tabular}{lcc}
\toprule
Method & $\downarrow$NLL & $\uparrow$Accuracy \\
\midrule
No PI & \numprint{1.057719877} $\pm$ \numprint{0.04954579145} & \nprounddigits{1}\numprint{67.00957895} $\pm$ \numprint{1.688532378} \\
Zero Imputation & \numprint{1.009205961} $\pm$ \numprint{0.03216330986} & \nprounddigits{1}\numprint{68.72326316} $\pm$ \numprint{1.438641521} \\
Mean Imputation & \textbf{\numprint{0.96297041}} $\pm$ \numprint{0.05761950626} & \nprounddigits{1}\numprint{70.11947368} $\pm$ \numprint{1.481929799} \\
\citet{lambert2018deep} & \numprint{1.032671031} $\pm$ \numprint{0.04380801054} & \nprounddigits{1}\numprint{67.102029} $\pm$ \numprint{1.296401091} \\
Full marginalization & \numprint{1.118683763} $\pm$ \numprint{0.05777261344} & \nprounddigits{1}\numprint{70.33589474} $\pm$ \numprint{2.470058701} \\
TRAM & \numprint{0.9799135795} $\pm$ \numprint{0.03726607221} & \nprounddigits{1}\numprint{70.06063158} $\pm$ \numprint{1.436149335} \\
Het-TRAM & \numprint{0.9720763316} $\pm$ \numprint{0.03767615735} & \nprounddigits{1}\textbf{\numprint{70.41452632}} $\pm$ \numprint{1.451023982} \\
\midrule
Distillation NO PI & \numprint{1.118454163} $\pm$ \numprint{0.03650233639} & \nprounddigits{1}\numprint{70.05221053} $\pm$ \numprint{1.432772122} \\
\citet{lopez2015unifying} & \numprint{1.121111} $\pm$ \numprint{0.03950380931} & \nprounddigits{1}\numprint{70.23189474} $\pm$ \numprint{1.394171594} \\
Distilled-TRAM & \textbf{\numprint{0.9408518195}} $\pm$ \numprint{0.03936131625} & \nprounddigits{1}\textbf{\numprint{71.82252632}} $\pm$ \numprint{1.406761267} \\
\bottomrule
\end{tabular}
\end{sc}
\end{small}
\end{center}
\vspace{-0.1in}
\end{table}

One dataset with annotator features is CIFAR-10H \citep{peterson2019human}, which is a re-labelled version of the CIFAR-10 \citep{krizhevsky2009learning} test set. The new labels are provided by crowd-sourced human annotators. We make use of three annotator features; the annotator ID, the reaction time of the annotator to provide the label and how much experience the annotator had with the task, as measured by the number of labels the annotator had previously provided.

As we only have annotator features for the CIFAR-10 test set, we use this as our training set and evaluate on the official training set. As a result we have only 10,000 images for training. To achieve reasonable performance we start from a MobileNet \citep{howard2017mobilenets} pretrained on ImageNet. Images have on average $>50$ annotations each. This is unrealistic for typical applications where 1-3 labels per example is more common. Therefore, we subsample 16,400 labels (1.64 labels per example), see Appendix \ref{app:experiment_details} for details of the subsampling procedure. The subsampled labels agree with the true CIFAR-10 test set labels 79.4\% of the time.

In Table \ref{cifar_10h_results} we see the results. First, and as expected, using annotator features via TRAM, marginalization or the imputation methods provides a performance improvement over standard neural network training without PI. Second, we see that TRAM performs on par with full marginalization (which uses 16,400 MC samples of $\va$ from the training set), despite having constant time compute and memory requirements w.r.t.~the number of MC samples for the full marginalization baseline (recall that full marginalization is not practical to apply to real-world production use cases). Mean imputation is a strong baseline on CIFAR-10H. Het-TRAM improves over TRAM demonstrating the efficacy of making $q(y | \vx, \va)$ heteroscedastic. It is noteworthy that distillation using PI, \citep{lopez2015unifying} does not improve over standard distillation without PI. However Distilled-TRAM with makes use of PI for distillation but then performs approximate marginalization and transfer learning via weight sharing improves over the distillation baselines on both accuracy and log-likelihood metrics.

\subsubsection{Qualitative analysis of CIFAR-10H results}

We qualitatively analyse how PI is helping improve the performance of TRAM on CIFAR-10H. The PI for CIFAR-10H does not contain a feature for annotator accuracy. However the PI feature do include the annotator ID, reaction time and experience, from which it may be possible to learn to trust some annotators more than others. TRAM can \textit{learn} to output a less confident distribution for unreliable annotators, thus reducing the harmful impact of incorrect labels.

In Fig.~\ref{fig:cifar_10h_qualitative_analysis} we show an analysis of the confidence of the TRAM and No PI (i.e., standard) models for each annotator in CIFAR-10H. Confidence is defined as the max probability given by the model across the 10 labels. We see that the trend for TRAM is a strong linear relationship between the reliability of an annotator and the confidence of the model, Fig.~\ref{fig:average_confidence}. The TRAM model is consistently more confident than the No PI model for reliable annotators while the No PI model is overconfident for unreliable annotators, Fig.~\ref{fig:confidence_delta}.

\begin{figure}[t]
\centering
\begin{subfigure}[b]{1.0\linewidth}
  \centering
  \includegraphics[width=0.96\textwidth]{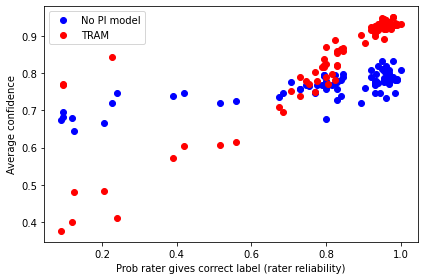}  
  \caption{Average confidence per model.}
  \label{fig:average_confidence}
\end{subfigure}
\begin{subfigure}[b]{1.0\linewidth}
  \centering
  \includegraphics[width=0.9775\textwidth]{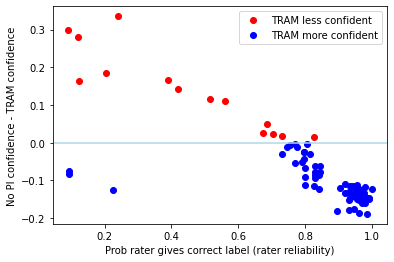}
  \caption{Delta in average confidence between models.}
  \label{fig:confidence_delta}
\end{subfigure}
\caption{How model confidence varies with annotator reliability for CIFAR-10H. Each point represents a single human annotator. The x-value is the probability the annotator's label agrees with the true CIFAR-10 label. See individual captions for the y-value meaning. TRAM is less confident for less reliable annotators.}
\label{fig:cifar_10h_qualitative_analysis}
\vspace{-0.1in}
\end{figure}

\subsection{ImageNet ILSVRC12}
\label{sec:imagenet_experiments}

\npdecimalsign{.}
\npfourdigitnosep
\nprounddigits{3}
\begin{table}[t]
\caption{ImageNet validation neg-log-likelihood and accuracy. Avg.\ over 10 seeds $\pm$ 1 std.\ deviation.}
\label{imagenet_sampled_malicious}
\begin{center}
\begin{small}
\begin{sc}
\begin{tabular}{lcc}
\toprule
Method & $\downarrow$NLL & $\uparrow$Accuracy \\
\midrule
No PI & \numprint{1.26393} $\pm$ \numprint{0.006606402618} & \nprounddigits{1}\numprint{71.6826} $\pm$ \numprint{0.2413849853} \\
Zero Imputation & \numprint{1.89453} $\pm$ \numprint{0.007977753791} & \nprounddigits{1}\numprint{63.4668} $\pm$ \numprint{0.2164782771} \\
Mean Imputation & \numprint{1.61899} $\pm$ \numprint{0.006705296414} & \nprounddigits{1}\numprint{65.0684} $\pm$ \numprint{0.2794050027} \\
\citet{lambert2018deep} & \numprint{1.26357} $\pm$ \numprint{0.005799243246} & \nprounddigits{1}\numprint{71.8197} $\pm$ \numprint{0.140565564} \\
Full marginalization & \numprint{1.21695} $\pm$ \numprint{0.004438530788} & \nprounddigits{1}\numprint{72.6387} $\pm$ \numprint{0.2339606092} \\
TRAM & \numprint{1.22524} $\pm$ \numprint{0.005765452666} & \nprounddigits{1}\numprint{72.5412} $\pm$ \numprint{0.186695712} \\
Het-TRAM & \textbf{\numprint{1.20732}} $\pm$ \numprint{0.008196449096} & \nprounddigits{1}\textbf{\numprint{72.8114}} $\pm$ \numprint{0.1718029879} \\
\midrule
Distillation NO PI & \numprint{1.2065875} $\pm$ \numprint{0.003988353581} & \nprounddigits{1}\numprint{72.576375} $\pm$ \numprint{0.1697779537} \\
\citet{lopez2015unifying} & \numprint{1.21566} $\pm$ \numprint{0.002584655576} & \nprounddigits{1}\numprint{72.7084} $\pm$ \numprint{0.1788501545} \\
Distilled-TRAM & \textbf{\numprint{1.15364}} $\pm$ \numprint{0.003873040723} & \nprounddigits{1}\textbf{\numprint{73.8209}} $\pm$ \numprint{0.1534155215} \\
\bottomrule
\end{tabular}
\end{sc}
\end{small}
\end{center}
\vspace{-0.1in}
\end{table}

In order to create a large-scale dataset with annotator features, we re-label the ImageNet ILSVRC12 training set by the following procedure. We download 16 different models pre-trained on ImageNet, see Appendix \ref{app:experiment_details} for further details. We also add a 17th malicious annotator model which picks a label uniformly at random from the 1,000 ImageNet ILSVRC12 classes. For each image in the training set we select the malicious annotator with 10\% probability and otherwise sample one of the 16 models with equal probability. We then sample a label from the predictive distribution of that model for that image. This is the label used for training. On average the sampled label agrees with the true ImageNet label 68.3\% of the time.

For TRAM and other PI baselines, the annotator features are the model ID (a proxy for a human annotator ID) and the probability of the label assigned by the model (a proxy for the confidence of a human annotator). The ImageNet image is used as the non-privileged information $\vx$. $\phi$ in TRAM is randomly initialized ResNet-50 \citep{he2016deep}.

See Table \ref{imagenet_sampled_malicious} for the results. The full marginalization baseline uses 1,000 MC samples of $\va$ from the training set. The imputation baselines perform worse than not using PI, perhaps due to the imputed values having low density $p(\va | \vx)$. Again TRAM performs on par with full marginalization and Het-TRAM has higher accuracy than both. The ImageNet labels are known to exhibit heteroscedasticity \citep{collier2021correlated}, therefore we make both $q(y | \vx)$ and $q(y | \vx, \va)$ heads heteroscedastic for Het-TRAM. Distilled-TRAM has significantly better NLL and accuracy than the two distillation baselines. In Appendix~\ref{app:two_step_TRAM}, we check that the efficient and easier to implement approximate one-step TRAM solution (as evaluated above), does perform on par with the ``exact'', more expensive two-step TRAM method on ImageNet.

\textbf{Robustness and limits of TRAM.} In Appendix~\ref{app:pi_ablation}, to test the robustness of TRAM w.r.t.~the available PI features,  we run an ablation removing the PI feature encoding the probability of the label assigned by the model. We show that even with a reduced PI feature set, TRAM still improves over the No PI method, but, as expected, the delta between TRAM and the No PI method reduces. As this experiment demonstrates, TRAM requires a PI feature set which is predictive of the label given the non-PI feature set. 

Empirically, we have found that a further condition for TRAM to provide gains is that the model capacity must be sufficient to overfit to noisy samples. This agrees with prior empirical and theoretical work in the PI literature \cite{lopez2015unifying,vapnik2009new,vapnik2015learning}. In Appendix~\ref{app:capacity_experiments}, we conduct an experiment on the ImageNet benchmark, where we vary the model capacity to move to an underfitting regime and demonstrate that the resultant gain from using TRAM is indeed reduced.

\subsection{Civil Comments}

We further evaluate our method on a large-scale text classification dataset. Civil Comments\footnote{https://www.kaggle.com/c/jigsaw-unintended-bias-in-toxicity-classification/data} is a collection of comments from independent news websites annotated with 7 toxicity labels (identity attack, insult, obscene, severe toxicity, sexually explicit, threat, toxicity). The Civil Comments Identities subset of the Civil Comments data contains privileged information in the form of 24 attributes identified in the comment (male, female, christian and so on), the non-PI feature is just the text comment.
The Identities subset consists of 405,130 training examples, 21,293 validation examples and 21,577 test set examples.

The shared network $\phi$ is a pre-trained Universal Sentence Encoder \citep{cer2018universal}. Table \ref{civil_comments} contains the test set results.
We report negative log-likelihood and accuracy averaged over the 7 labels. The TRAM, Het-TRAM and imputation methods perform similarly well in terms of average accuracy, outperforming the No PI baseline as well as the Gaussian Dropout and full marginalization methods.

The poor accuracy of the full marginalization method is interesting to note. The PI is directly derived from the non-PI (in the form of 24 identity human labelled attributes for the non-PI). This is a clear violation of the independence assumption required for a MC estimate of full marginalization.
The dependence of $\va$ on $\vx$ is most clearly identifiable for the Civil Comments Identities dataset; as a result the relative performance of the full marginalization method is poorest on this dataset. Further note that the TRAM and Het-TRAM methods have lower negative log-likelihood than all other baseline methods. Standard distillation with no PI and \citet{lopez2015unifying} style distillation where the teacher network is trained with PI does not provide a performance improvement over the no PI baseline. Distilled-TRAM performs on par with vanilla TRAM.
\npdecimalsign{.}
\npfourdigitnosep
\nprounddigits{3}
\begin{table}[t]
\caption{Civil Comments Identities test set negative log-likelihood and average accuracy over 7 classes. Averaged over 10 training runs $\pm$ 1 std.\ deviation.}
\label{civil_comments}
\begin{center}
\begin{small}
\begin{sc}
\vspace*{-0.4cm}
\begin{tabular}{lcc}
\toprule
Method & $\downarrow$NLL & $\uparrow$ Accuracy \\
\midrule
No PI & \numprint{0.08490299763} $\pm$ \numprint{0.01096899825} & \nprounddigits{1}\numprint{97.82686173} $\pm$ \nprounddigits{2}\numprint{0.1151119512} \\
Zero Imputation & \numprint{0.07314780021} $\pm$ \numprint{0.003587435278} & \nprounddigits{1}\textbf{\numprint{98.21682743}} $\pm$ \nprounddigits{2}\numprint{0.008596410111} \\
Mean Imputation & \numprint{0.06889486955} $\pm$ \numprint{0.002840162102} & \nprounddigits{1}\textbf{\numprint{98.21039584}} $\pm$ \nprounddigits{2}\numprint{0.01982564166} \\
\citet{lambert2018deep} & \numprint{0.0843834127} $\pm$ \numprint{0.01230126316} & \nprounddigits{1}\numprint{97.84606196} $\pm$ \nprounddigits{2}\numprint{0.1659110046} \\
Full marginalization & \numprint{0.06454874407} $\pm$ \numprint{0.003720466963} & \nprounddigits{1}\numprint{97.7833535} $\pm$ \nprounddigits{2}\numprint{0} \\
TRAM & \numprint{0.06375057687} $\pm$ \numprint{0.002208254555} & \nprounddigits{1}\textbf{\numprint{98.18760144}} $\pm$ \nprounddigits{2}\numprint{0.01414777558} \\
Het-TRAM & \textbf{\numprint{0.06235574071}} $\pm$ \numprint{0.00109289618} & \nprounddigits{1}\numprint{98.132554} $\pm$ \numprint{0.05998065949} \\
\midrule
Distillation NO PI & \numprint{0.09397059188} $\pm$ \numprint{0.01110288377} & \nprounddigits{1}\numprint{97.82009902} $\pm$ \numprint{0.1039320211} \\
\citet{lopez2015unifying} & \numprint{0.08929911245} $\pm$ \numprint{0.0001465799714} & \nprounddigits{1}\numprint{97.7833535} $\pm$ \numprint{0.0} \\
Distilled-TRAM & \numprint{0.06497619728} $\pm$ \numprint{0.0009576831681} & \nprounddigits{1}\textbf{\numprint{98.20476824}} $\pm$ \numprint{0.01020173386} \\
\bottomrule
\end{tabular}
\end{sc}
\end{small}
\end{center}
\vspace{-0.1in}
\end{table}

\section{Conclusion}

We introduced TRAM, a new method for LUPI in supervised neural networks. TRAM (i) learns an efficient, simple distribution to approximately marginalize over PI at test time and (ii) partitions the parameter space enabling transfer via weight sharing of the knowledge learned with access to PI. TRAM can be successfully combined with established methods for dealing with noisy labels; distillation (Distilled-TRAM) and heteroscedastic output layers (Het-TRAM). We have analysed a linear model with PI where deriving analytic results are feasible. In this setting we have shown the utility of using PI and ingredient \#1 of our TRAM method, the marginalization over PI. Using a synthetic low-dimensional problem we have further shown the effectiveness of ingredient \#2 of our proposed TRAM method, transfer learning via weight sharing of representations learned with access to PI. We then have empirically validated the single-step TRAM procedure on larger-scale datasets in the image and text domain; CIFAR-10H, a noisy version of ImageNet and Civil Comments Identities.

\newpage

\bibliography{example_paper}
\bibliographystyle{icml2022}

\clearpage

\appendix
\section{Appendix}

\section{Proof of Equation~(\ref{marginal_is_optimal_bis})}
\label{app:marginal_is_optimal_bis_proof}

As a reminder, we consider $C$ class labels and denote by $\Delta_C$ the $C$-dimensional simplex. We define the set of distributions $\mathcal{Q}$ over the $C$ class labels by
\begin{equation*}
\mathcal{Q} = \{ q(y| \cdot)\, |\, \forall \vx \in \mathcal{X}, q(y|\vx) \in \Delta_C \}.
\end{equation*}

Consider the optimization problem
\begin{equation}\label{eq:kl_between_p_marg_q_marg}
    \min_{q \in \mathcal{Q}} \mathbb{E}_{\vx \sim p(\vx)}\brac{\kl{p(y | \vx)\!}{\!q(y|\vx)}}
\end{equation}
whose solution is straightforwardly given by the marginal distribution $ \vx \mapsto q^\star(y|\vx) = p(y | \vx)$. We recall that the KL $\kl{p(y | \vx)\!}{\!q(y|\vx)}$ is defined by
\begin{equation}\label{eq:kl_between_p_marg_q_marg_without_exp_x}
\kl{p(y | \vx)\!}{\!q(y|\vx)} 
\!=\!
\sum_{j=1}^C p_j(y|\vx) \log\Big(\frac{p_j(y|\vx)}{q_j(y|\vx)}\Big).
\end{equation}
For any $\vx$ and $j \leq C$, we can rewrite the terms of the sum
\begin{equation*}
    p_j(y|\vx) \log\Big(\frac{p_j(y|\vx)}{q_j(y|\vx)}\Big)
\end{equation*}
as
\begin{equation*}
\mathbb{E}_{\va|\vx \sim p(\va|\vx)}
\brac{
p_j(y|\vx, \va) \log\Big(\frac{p_j(y|\vx)}{q_j(y|\vx)}\Big)
}
\end{equation*}
where we have used (i) the fact that $\log(\frac{p_j(y|\vx)}{q_j(y|\vx)})$ does not depend on $\va$ and (ii) the definition of the marginal distribution
\begin{eqnarray}
p_j(y|\vx)
 = &
\int p_j(y|\vx, \va) p(\va|\vx) d\va \nonumber\\
 = &
    \mathbb{E}_{\va|\vx \sim p(\va|\vx)}
\brac{
p_j(y|\vx, \va)
}.\nonumber
\end{eqnarray}
Multiplying and dividing in the argument of the $\log$ by $p_j(y|\vx, \va)$, we obtain
\begin{equation*}
\mathbb{E}_{\va|\vx \sim p(\va|\vx)}
\brac{
p_j(y|\vx, \va) \log\Big(
\frac{p_j(y|\vx, \va)}{q_j(y|\vx)}
\frac{ p_j(y|\vx)}{p_j(y|\vx, \va)}
\Big)
}.
\end{equation*}
Summing over $j \in \{1,\dots, C\}$ to reconstruct the KL term~(\ref{eq:kl_between_p_marg_q_marg_without_exp_x}), this leads to, for any $\vx$,
\begin{eqnarray}
\kl{p(y | \vx)\!}{\!q(y|\vx)} = 
\mathbb{E}_{\va|\vx}
\brac{
\kl{p(y | \vx, \va )\!}{\!q(y|\vx)}
} \nonumber\\
- \mathbb{E}_{\va|\vx}
\brac{
\kl{p(y | \vx, \va )\!}{\!p(y|\vx)}
}.\nonumber
\end{eqnarray}
Since the second term above \textit{does not depend on} $q$, minimizing~(\ref{eq:kl_between_p_marg_q_marg}) is equivalent to minimizing
\begin{eqnarray}
  \min_{q \in \mathcal{Q}} &
  \mathbb{E}_{\vx} \brac{
  \mathbb{E}_{\va|\vx}
\brac{
\kl{p(y | \vx, \va )\!}{\!q(y|\vx)}
}
} \nonumber \\
=   \min_{q \in \mathcal{Q}} &
  \mathbb{E}_{(\vx, \va) \sim p(\vx, \va)} \brac{
\kl{p(y | \vx, \va )\!}{\!q(y|\vx)}
} \nonumber
\end{eqnarray}
which is equal to~(\ref{marginal_is_optimal_bis}) and which is, analogously to~(\ref{eq:kl_between_p_marg_q_marg}), minimized by the marginal distribution $ \vx \mapsto q^\star(y|\vx) = p(y | \vx)$.

\section{Heteroscedastic Motivation}
\label{app:heteroscedastic}

We consider a simplified special case of our framework in which the conditional model $p(y | \vx, \va)$ is \textit{homoscedastic} but the optimal variational distribution in the sense of Eq.~\ref{marginal_is_optimal_bis} is \textit{heteroscedastic}. This motivates \textbf{Het-TRAM}, in which the variational approximations $q(y | \vx)$ and $q(y | \vx, \va)$ are heteroscedastic.

Suppose we have a regression dataset constructed from labels assigned by $M$ annotators. Each annotator has their own homoscedastic Gaussian model $p(y | \vx, a=m) = \mathcal{N}(\mu_{\theta_m}(\vx), 1)$. Here the PI is a single discrete Categorical feature representing the annotator ID which takes one of $M$ values with equal probability, $a \sim \text{Cat}(\frac{1}{M})$.

The marginal $p(y | \vx)$ is a Gaussian Mixture Model. We choose our variational family to be the Gaussian distribution, $q(y | \vx) = \mathcal{N}(\mu(\vx), \sigma^2(\vx))$. The values of $\mu$ and $\sigma^2$ that minimize Eq.~\ref{marginal_is_optimal_bis} are: $\mu_{*}(\vx) = \frac{1}{M} \sum_m \mu_{\theta_m}(\vx)$ and $\sigma^{2}_{*}(\vx) = (M - \mu_{*}(\vx)) + \frac{1}{M} \sum_m \mu_{\theta_m}^2(\vx)$ \citep{lakshminarayanan2017simple}. Crucially note that despite the conditional distribution being homoscedastic, the best variational distribution is heteroscedastic as the variance depends on the location in $\mathcal{X}$ space.

\section{Experimental Details}
\label{app:experiment_details}

\subsection{Data generation process} \label{app:data_gen}

\paragraph{Synthetic classification experiment.} Compared to the synthetic regression experiment, for the synthetic classification experiment, we increase the number of training samples from  $N=2,500$ to $N=20,000$ and increase the scale of the additive noise such that $\epsilon \sim \mathcal{N}(0, 0.4)$.

\paragraph{CIFAR-10H.} We use the CIFAR-10 image as the non-privileged information $\vx$. The annotator ID, the number of prior annotations the annotator has provided and the reaction time in milliseconds of the annotator, are used as privileged information $\va$. For feature pre-processing the annotator ID is one-hot encoded. The number of prior annotations and the reaction time are independently divided into 10 equally sized quantiles and the quantile ID is one-hot encoded. The image is pre-processed according to the standard MobileNet pre-processing \citep{howard2017mobilenets}.

As CIFAR-10H has on average more than 50 annotations per image and the labels are not particularly noisy. We subsample the CIFAR-10H labels by the following procedure. We keep all labels by the 41 annotators that agree with the true CIFAR-10 label less than 85\% of the time. We then select a further 41 annotators from the remaining annotators. The average agreement of the bad annotators with the CIFAR-10 label is 63.3\%, in the full subset of labels: 79.2\% and in the full CIFAR-10H dataset: 94.9\%. The subsampling procedure leaves 16,400 labels from 82 annotators while the full CIFAR-10H dataset has 514,200 labels from 2,571 annotators.

\paragraph{ImageNet.} The annotator features are the model ID used to re-label $\vx$, which is one-hot encoded and the probability of that label being sampled. See main paper for details on the sampling procedure and see Table~\ref{tab:imagenet_models} for the list of models used and their accuracy on the ImageNet training set. The pre-trained models are downloaded from tf.keras.applications\footnote{https://www.tensorflow.org/api\_docs/python/tf/keras/applications}.

\begin{table}[t]
{
\caption{Pre-trained models used to re-label ImageNet ILSVRC12 training set and their accuracy on that training set.}
\label{tab:imagenet_models}
\begin{center}
\begin{tabular}{lc}
\toprule
Model & Training set accuracy \\
\midrule
ResNet50V2 & 0.70086 \\
ResNet101V2 & 0.72346 \\
ResNet152V2 & 0.72738 \\
DenseNet121 & 0.74782 \\
DenseNet169 & 0.76184 \\
DenseNet201 & 0.77344 \\
InceptionResNetV2 & 0.8049 \\
InceptionV3 & 0.77994 \\
MobileNet & 0.70594 \\
MobileNetV2 & 0.71458 \\
MobileNetV3Large & 0.75622 \\
MobileNetV3Small & 0.68158 \\
NASNetMobile & 0.74302 \\
VGG16 & 0.71178 \\
VGG19 & 0.71156 \\
Xception & 0.79076 \\
\bottomrule
\end{tabular}
\end{center}
}
\end{table}

\subsection{Hyperparameters} \label{app:hyperparams}

\paragraph{Synthetic experiments.} Both layers of the two-layer MLP are of dimension 64, with tanh hidden activations and linear output activation. Both the PI and non-PI networks are fit for 10 epochs by the Adam optimizer \citep{kingma2014adam} with mean squared error loss function.

\paragraph{CIFAR-10H.} For all methods $\phi(\vx)$ (or equivalent) is a MobileNet \citep{howard2017mobilenets} pre-trained on ImageNet ILSVRC12, followed by a global average pooling layer and a Dense + ReLU layer with 64 units. $\psi(\vx, \va)$ is a two-layer MLP with 64 units per layer and ReLU activation. The first layer takes only $\va$ as an input, while the second layer takes the output of the first layer concatenated with $\phi(\vx)$ as input.

All models are trained for 20 epochs with the Adam optimizer with base learning rate$=0.001$, $\beta_1=0.9$, $\beta_2=0.999$, $\epsilon=1e-07$. All models are trained with L2 weight regularization with weighting $1e-3$.

Heteroscedastic models are trained using the method of \citet{collier2021correlated} with 4 factors for the low-rank covariance matrix approximation and a softmax temperature parameter of $\tau=3.0$. Distilled models are also trained with a softmax temperature of $\tau=3.0$ to smooth the teacher labels and with the distillation hyperparameter $\lambda=0.5$ which weights the losses from the soft teacher labels and the true labels. A grid search over $\tau \in \{1.0, 2.0, 3.0, 4.0 \}$ and $\lambda \in \{0.0, 0.25, 0.5, 0.75, 1.0 \}$ was conducted.

\paragraph{ImageNet.} For all methods $\phi(\vx)$ (or equivalent) is a randomly initialized ResNet-50 \citep{he2016deep} with the output layer removed. $\psi(\vx, \va)$ is a two-layer MLP with 128 units per layer and ReLU activation, the output of this MLP is concatenated with $\phi(\vx)$ and then passed to the output layer. The first layer of the $\psi(\vx, \va)$ MLP takes only $\va$ as an input, while the second layer takes the output of the first layer concatenated with $\phi(\vx)$ as input.

All but Het-TRAM models are trained for 90 epochs with the SGD optimizer with base learning rate$=0.1$, decayed by a factor of 10 after 30, 60 and 80 epochs. Following \citet{collier2021correlated}, Het-TRAM is trained for 270 epochs with the same initial learning rate and learning rate decay at 90, 180 and 240 epochs.  All models are trained with L2 weight regularization with weighting $1e-4$.

Heteroscedastic models use 15 factors for the low-rank covariance matrix approximation and a softmax temperature parameter of $\tau=1.5$. Distilled models are trained with a softmax temperature of $\tau=3.0$ and with the distillation hyperparameter $\lambda=0.5$. A grid search over $\tau \in \{1.0, 2.0, 3.0, 4.0 \}$ and $\lambda \in \{0.0, 0.25, 0.5, 0.75, 1.0 \}$ was conducted.

\section{Two-step TRAM: ImageNet scale representation learning experiment}
\label{app:two_step_TRAM}

We conduct experiment to test two things: 1) does the one-step TRAM procedure, introduced in \S \ref{sec:transfer_via_weight_sharing}, which is easier for practitioners to implement, approximate the two-step TRAM procedure well and 2) can the results of the synthetic representation-learning experiment, \S \ref{sec:rep_learning}, be replicated in a larger scale setting.

We train a feature extractor with and without access to PI on ImageNet, following the same procedure, architecture and dataset used in the main paper. We then freeze the feature extractor and train a single dense/linear layer with softmax activation on top of the fixed features. We then evaluate the efficacy of these features trained with and without PI using this ``linear probe'' evaluation widely used in the representation learning literature \citep{chen2020simple}.

The results are presented in Table~\ref{table:imagenet_two_step_ablation}. We see that the simpler single-step TRAM method approximates the more complicated two-step TRAM method very well. In addition we see that the features learned by the network with access to PI which are then frozen and evaluated using a linear probe protocol perform better in terms of accuracy and log-likelihood.

\npdecimalsign{.}
\npfourdigitnosep
\nprounddigits{3}
\begin{table}[t]
\caption{Two-step TRAM: scaling up our synthetic representation-learning experiment. ImageNet validation set negative log-likelihood and accuracy. Averaged over 10 training runs $\pm$ 1 std.\ dev.}
\label{table:imagenet_two_step_ablation}
\begin{center}
\begin{small}
\begin{sc}
\begin{tabular}{lcc}
\toprule
Method & $\downarrow$NLL & $\uparrow$Accuracy \\
\midrule
One-step No PI & \numprint{1.26393} $\pm$ \numprint{0.006606402618} & \nprounddigits{1}\numprint{71.6826} $\pm$ \numprint{0.2413849853} \\
Two-step No PI & \numprint{1.265225} $\pm$ \numprint{0.008458934414} & \nprounddigits{1}\numprint{71.698875} $\pm$ \numprint{0.2812570951} \\
One-step TRAM & \numprint{1.22524} $\pm$ \numprint{0.005765452666} & \nprounddigits{1}\numprint{72.5412} $\pm$ \numprint{0.186695712} \\
Two-step TRAM & \numprint{1.226075} $\pm$ \numprint{0.002429506123} & \nprounddigits{1}\numprint{72.6506} $\pm$ \numprint{ 0.1565161334} \\
\bottomrule
\end{tabular}
\end{sc}
\end{small}
\end{center}
\end{table}

\section{Examining the conditions under which PI is helpful}
\label{app:capacity_experiments}

If there are a large number of training samples relative to the capacity of the model being fit, the ability of a model with access to PI to explain away noisy samples will not result in significant performance improvements as the noise can be averaged out by the large training set. We test this hypothesis using our ImageNet benchmark.

We move to an underfitting regime by reducing the capacity of the network trained on the ImageNet PI dataset. In particular, we train a ResNet-50 with $\frac{1}{4}$ the number of parameters as the standard ResNet-50. In Table~\ref{table:imagenet_capacity_ablation} we see that when we reduce the capacity of the network while keeping the number of training samples fixed the gains to the TRAM method are reduced. This indicates a limitation of TRAM, that it is most beneficial in a setting where the model has sufficient capacity to overfit to noisy training samples. This aligns with prior empirical and theoretical work on the general limitations of PI (not specific to TRAM) \cite{lopez2015unifying,vapnik2009new,vapnik2015learning}.

\npdecimalsign{.}
\npfourdigitnosep
\nprounddigits{3}
\begin{table}[ht]
\caption{ImageNet ablation with reduced capacity networks. ImageNet validation set negative log-likelihood and accuracy. Averaged over 10 training runs $\pm$ 1 std.\ dev.}
\label{table:imagenet_capacity_ablation}
\begin{center}
\begin{small}
\begin{sc}
\begin{tabular}{lcc}
\toprule
Method & $\downarrow$NLL & $\uparrow$Acc.\textbf{}\ \\
\midrule
No PI ResNet-50 & \numprint{1.26393} $\pm$ \numprint{0.006606402618} & \nprounddigits{1}\numprint{71.6826} $\pm$ \numprint{0.2413849853} \\
TRAM ResNet-50 & \numprint{1.22524} $\pm$ \numprint{0.005765452666} & \nprounddigits{1}\numprint{72.5412} $\pm$ \numprint{0.186695712} \\
No PI @ $\frac{1}{4}$ capacity  & \numprint{1.7171} $\pm$ \numprint{0.7169588552} & \nprounddigits{1}\numprint{62.119} $\pm$ \numprint{0.2735626071} \\
TRAM @ $\frac{1}{4}$ capacity  & \numprint{1.7203} $\pm$ \numprint{0.4146685423} & \nprounddigits{1}\numprint{62.377} $\pm$ \numprint{0.2306306571} \\
\bottomrule
\end{tabular}
\end{sc}
\end{small}
\end{center}
\end{table}

\section{Synthetic experiment: vary $\epsilon$}
\label{app:toy_experiment_vary_eps}

We vary the standard deviation of $\epsilon$ used in our motivating synthetic regression experiment, \S \ref{sec:rep_learning}. The results can be seen graphically in Fig.~\ref{fig:representation_learning_ablation}. Fig.~\ref{fig:representation_learning_ablation} also contains the average RMSE to the true marginal across the data points plotted. The graphical and numerical results demonstrate that even for large levels of noise PI aids with representation learning but as expected, as the level of noise grows the advantage of using PI diminishes as it becomes increasingly difficult to distinguish irreducible noise from noise which can be explained away with PI.

For these experiments, we make a Monte-Carlo estimate of the conditional mutual-information, $I(\vy ; \va | \vx)$, (based on a binning approach) 
and provide a correspondence table from $\epsilon$ to $I(\vy ; \va | \vx)$, see Table \ref{eps_mi_conversion}. As noted above, as we increase $\epsilon$ the gains to the method PI reduces. Using this correspondence table, we can now see increasing $\epsilon$ causes $I(\vy ; \va | \vx)$ to go down.

\npdecimalsign{.}
\npfourdigitnosep
\nprounddigits{3}
\begin{table}[th]
\caption{$\epsilon$ to $I(\vy ; \va | \vx)$ correspondence table for Fig.~\ref{fig:representation_learning_ablation}.}
\label{eps_mi_conversion}
\begin{center}
\begin{small}
\begin{sc}
\begin{tabular}{cccccc}
\toprule
$\epsilon$ & 0.1 & 0.5 & 1.0 & 1.5 & 2.0  \\ 
$I(\vy ; \va | \vx)$ & 0.408 & 0.150 & 0.059 & 0.034 & 0.024 \\
\bottomrule
\end{tabular}
\end{sc}
\end{small}
\end{center}
\end{table}

\begin{figure}[t]
\centering
\begin{subfigure}[b]{0.48\linewidth}
  \centering
  \includegraphics[width=1.0\textwidth]{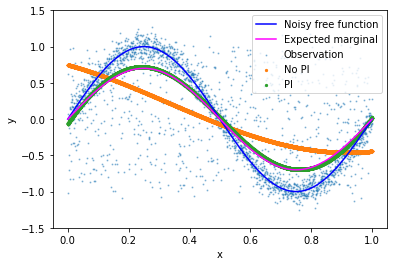}  
  \caption{$\epsilon \sim \mathcal{N}(0, 0.1)$. \\
  RMSE No PI to marginal: 0.0858 \\
  RMSE PI to marginal: 0.0008}
\end{subfigure}
\begin{subfigure}[b]{0.48\linewidth}
  \centering
  \includegraphics[width=1.0\textwidth]{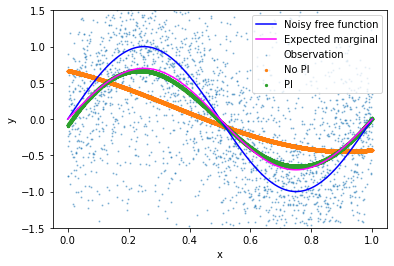}
  \caption{$\epsilon \sim \mathcal{N}(0, 0.5)$. \\
  RMSE No PI to marginal: 0.0880 \\
  RMSE PI to marginal: 0.0007}
\end{subfigure}
\begin{subfigure}[b]{0.48\linewidth}
  \centering
  \includegraphics[width=1.0\textwidth]{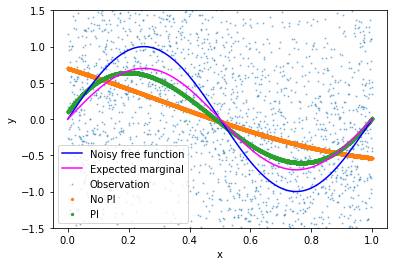}
  \caption{$\epsilon \sim \mathcal{N}(0, 1.0)$. \\
  RMSE No PI to marginal: 0.0897 \\
  RMSE PI to marginal: 0.0027}
\end{subfigure}
\begin{subfigure}[b]{0.48\linewidth}
  \centering
  \includegraphics[width=1.0\textwidth]{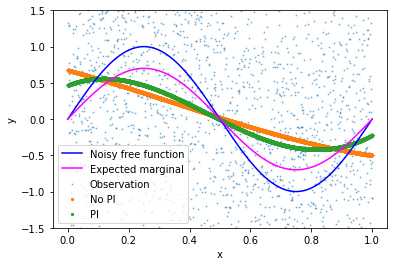}
  \caption{$\epsilon \sim \mathcal{N}(0, 1.5)$. \\
  RMSE No PI to marginal: 0.0841 \\
  RMSE PI to marginal: 0.0472}
\end{subfigure}
\begin{subfigure}[b]{0.48\linewidth}
  \centering
  \includegraphics[width=1.0\textwidth]{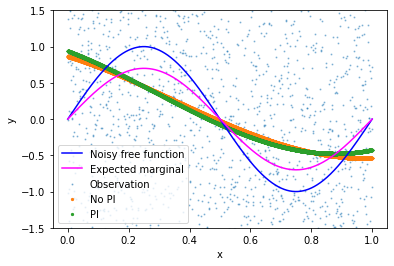}
  \caption{$\epsilon \sim \mathcal{N}(0, 2.0)$. \\
  RMSE No PI to marginal: 0.0977 \\
  RMSE PI to marginal: 0.0977}
\end{subfigure}
\caption{Varying the influence of $\epsilon$ on our motivating synthetic experiment.}
\label{fig:representation_learning_ablation}
\end{figure}

\section{Imagenet experiment PI ablation}
\label{app:pi_ablation}

We run an ablation, removing PI feature: the probability of the label assigned by the model from the PI set. We are thus left with just one PI feature, the one-hot encoded ID of the model that produced the label.

We see the results in Table~\ref{imagenet_pi_ablation}. As expected (and predicted by our theoretical analysis), removing informative PI reduces the effectiveness of TRAM. Nonetheless, TRAM with the reduced PI feature set still outperforms the No PI baseline, with accuracy and log-likelihood lying between the No PI and full PI feature set TRAM methods.

\npdecimalsign{.}
\npfourdigitnosep
\nprounddigits{3}
\begin{table}[ht]
\caption{ImageNet ablation with reduced PI feature set. ImageNet validation set negative log-likelihood and accuracy. Averaged over 10 training runs $\pm$ 1 std.\ dev.}
\label{imagenet_pi_ablation}
\begin{center}
\begin{small}
\begin{sc}
\begin{tabular}{lcc}
\toprule
Method & $\downarrow$NLL & $\uparrow$Acc.\textbf{}\ \\
\midrule
No PI & \numprint{1.26393} $\pm$ \numprint{0.006606402618} & \nprounddigits{1}\numprint{71.6826} $\pm$ \numprint{0.2413849853} \\
TRAM w/ full PI set & \numprint{1.22524} $\pm$ \numprint{0.005765452666} & \nprounddigits{1}\numprint{72.5412} $\pm$ \numprint{0.186695712} \\
TRAM w/ reduced PI set & \numprint{1.24561} $\pm$ \numprint{0.00398} & \nprounddigits{1}\numprint{72.013} $\pm$ \numprint{0.208} \\
\bottomrule
\end{tabular}
\end{sc}
\end{small}
\end{center}
\end{table}

\FloatBarrier

\onecolumn
\section{Risk analysis}
\label{app:risk_analysis}

\paragraph{Generative model and notations.} We assume the following 
\begin{itemize}
    \item $\va \in \R^m, \vx \in \R^d$,
    \item $\va \sim p(\va | \vx) = \gN( \mu(\vx) | \Sigma(\vx) )$ for some mean and covariance dependent on $\vx$,
    \item $y = \vx^\top \vw^\star + \va^\top \vv^\star + \varepsilon$ with $\varepsilon \sim \gN(0, \sigma^2)$.
\end{itemize}
When considering $n$ observations from this generative model, we use the matrix representations $\vy \in \R^n, \mX \in \R^{n \times d}$, $\mA \in \R^{n \times m}$ and $\rvvarepsilon \in \R^n$.
We also write the zero-mean Gaussian vector
\begin{equation*}
    \vz = (\mA - \mu(\mX)) \vv^\star + \rvvarepsilon \in \R^n \sim \gN(\vzero, \sigma^2 \mI + \mLambda)
\end{equation*}
where we have defined the diagonal covariance
\begin{equation*}
    \mLambda = \mLambda(\vv^\star, \mX) = \text{Diag}\big( \{ (\vv^\star)^\top \Sigma(\vx_i) \vv^\star \}_{i=1}^n \big) \in \R^{n \times n}.
\end{equation*}

We list below some notation that we will repeatedly use
\begin{itemize}
    \item The orthogonal projector associated with $\mX$:
    $$\mPi_x = \mX (\mX^\top \mX)^{-1} \mX^\top \in \R^{n \times n}.$$
    \item Similarly, the orthogonal projector associated with $\mA$:
    $$\mPi_a = \mA (\mA^\top \mA)^{-1} \mA^\top \in \R^{n \times n}.$$
    \item The projections $\mX_{a\perp} = (\mI - \mPi_a) \mX$ and $\mA_{x\perp} = (\mI - \mPi_x) \mA$.
    \item The matrices: $\mH = (\mX^\top \mX)^{-1} \mX^\top \in \R^{d \times n}$ and
    $\mG = (\mA^\top \mA)^{-1} \mA^\top \in \R^{m \times n}$.
    \item The matrices above when restricted to the projections of $\mX$ and $\mA$ respectively, that is, $$\mH_{a\perp} = (\mX_{a\perp}^\top \mX_{a\perp})^{-1} \mX_{a\perp}^\top \in \R^{d \times n}\ \ \text{and}\ \
    \mG_{x\perp} = (\mA_{x\perp}^\top \mA_{x\perp})^{-1} \mA_{x\perp}^\top \in \R^{m \times n}.$$
\end{itemize}

\subsection{Definition of the risk}

We will compare different estimators based on their different \textit{risks}. We focus on the \textit{fixed} design analysis~\citep{bach2021learning}, i.e., we study the errors only due to resampling the noise $\varepsilon$ and the feature $\va$. 

Given a predictor $\tau(\mX)$ based on the training quantities $(\mX, \mA, \rvvarepsilon)$, we consider $\vy' = \mX \vw^\star + \mA' \vv^\star + \rvvarepsilon'$ (where the prime is to stress the difference with the training quantities without prime) and define the risk of $\tau$ as
\begin{equation}
    \gR(\tau(\mX))) = \E_{\varepsilon' \sim p(\varepsilon'), \va' \sim p(\va' | \vx)}\bigg\{\frac{1}{n} \|\vy' - \tau(\mX) \|^2 \bigg\}.
\end{equation}
Expanding the square with $\vy' - \tau(\mX) = \mX \vw^\star - \tau(\mX) + \mu(\mX) \vv^\star + \vz'$, we obtain the expression 
\begin{equation}\label{eq:expanded_risk}
    \gR(\tau(\mX)) = \frac{1}{n} \|\mX \vw^\star - \tau(\mX) + \mu(\mX) \vv^\star \|^2 + \frac{1}{n}\text{tr}(\sigma^2 \mI + \mLambda).
\end{equation}
Following common practices~\citep{bach2021learning}, to assess the risk, we finally take a second expectation $\E_{\varepsilon \sim p(\varepsilon), \va \sim p(\va | \vx)}[\gR(\tau(\mX))]$ with respect to the training quantities $(\mA, \rvvarepsilon)$.

Since we will mostly consider differences of risks, we omit the variance term $\frac{1}{n}\text{tr}(\sigma^2 \mI + \mLambda)$ in the equations below.

\subsection{Capturing the benefit of PI without marginalization}

We first describe when, in absence of any marginalization, ordinary least squares ignoring PI is worse than ordinary least squares using PI with mean imputation at prediction time.

\begin{proposition} Assume that $\mX^\top \mX$ is invertible. Moreover, assume that $\mA^\top \mA$ and $[\mX, \mA]^\top [\mX, \mA]$ are almost surely invertible. We have that 
$$
\E[\gR(\tau_\text{NO-PI}(\mX))] > \E[\gR(\tau_\text{PI}(\mX))]
$$
if and only if
$$
\frac{1}{n} \| (\mI- \mPi_x) \mu(\mX) \vv^\star\|^2 + \frac{\sigma^2 d}{n} + \frac{1}{n} \text{tr}(\mPi_x \mLambda) > \frac{\sigma^2}{n} \E[\|\mK \|^2]
$$
with $\mK = \mX \mH_{a\perp} + \mu(\mX) \mG_{x\perp}$. When $m=1$ (i.e., $\mA$ is a column vector), a sufficient condition is
$$
\frac{1}{n} \| (\mI- \mPi_x) \mu(\mX) \vv^\star\|^2 + \frac{1}{n} \text{tr}(\mPi_x \mLambda) > 2 \E\bigg[ \frac{\|\mPi_x \mA\|^2 + \|\mu(\mX)\|^2}{\|(\mI - \mPi_x) \mA\|^2} \bigg] + \frac{\sigma^2 d}{n}.
$$
\end{proposition}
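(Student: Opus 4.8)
The plan is to compute each risk explicitly from the expanded-risk formula~(\ref{eq:expanded_risk}) and then compare. First I would note that, by the fixed-design setup, the quantity driving the risk is the squared bias term $\tfrac{1}{n}\|\mX\vw^\star - \tau(\mX) + \mu(\mX)\vv^\star\|^2$, since the variance term $\tfrac{1}{n}\text{tr}(\sigma^2\mI + \mLambda)$ is common to both estimators and cancels in the difference. So the whole proposition reduces to computing $\E[\|\mX\vw^\star - \tau(\mX) + \mu(\mX)\vv^\star\|^2]$ for each of $\tau_\text{NO-PI}$ and $\tau_\text{PI}$, where the outer expectation is over the training randomness $(\mA,\rvvarepsilon)$.

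For \texttt{(NO-PI)}, I would substitute $\hat{\vw}_0 = (\mX^\top\mX)^{-1}\mX^\top\vy = \mH\vy$ with $\vy = \mX\vw^\star + \mu(\mX)\vv^\star + \vz$ (using the decomposition $\vz=(\mA-\mu(\mX))\vv^\star+\rvvarepsilon$ introduced in the notations). Because $\mH\mX = \mI_d$, the prediction $\tau_\text{NO-PI}(\mX)=\mX\hat{\vw}_0$ equals $\mX\vw^\star + \mPi_x(\mu(\mX)\vv^\star + \vz)$, so the residual becomes $(\mI-\mPi_x)\mu(\mX)\vv^\star - \mPi_x\vz$. Taking norms, expanding the cross term (which vanishes in expectation since $\E[\vz]=\vzero$ and $\vz$ is independent of $\mX$), and using $\E[\vz\vz^\top]=\sigma^2\mI+\mLambda$ with $\text{tr}(\mPi_x(\sigma^2\mI+\mLambda)) = \sigma^2 d + \text{tr}(\mPi_x\mLambda)$, I would obtain the left-hand side terms $\tfrac{1}{n}\|(\mI-\mPi_x)\mu(\mX)\vv^\star\|^2 + \tfrac{\sigma^2 d}{n} + \tfrac{1}{n}\text{tr}(\mPi_x\mLambda)$. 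The more delicate computation is \texttt{(PI)}: here $[\hat{\vw}_1;\hat{\vv}_1]=(\mQ^\top\mQ)^{-1}\mQ^\top\vy$ with $\mQ=[\mX,\mA]$, and the prediction at test time is $\mX\hat{\vw}_1 + \mu(\mX)\hat{\vv}_1$ (mean imputation), not $\mQ[\hat{\vw}_1;\hat{\vv}_1]$. I would use the block/Frisch--Waugh--Lovell structure to write $\hat{\vw}_1 = \mH_{a\perp}\vy$ and $\hat{\vv}_1 = \mG_{x\perp}\vy$ in terms of the partialled-out projections $\mX_{a\perp},\mA_{x\perp}$, so that the imputed prediction is $(\mX\mH_{a\perp} + \mu(\mX)\mG_{x\perp})\vy = \mK\vy$ with $\mK$ as defined in the statement. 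Substituting $\vy$ and simplifying the deterministic part (showing $\mK(\mX\vw^\star+\mu(\mX)\vv^\star)$ reproduces $\mX\vw^\star + \mu(\mX)\vv^\star$ up to the relevant projections) should leave a residual whose expected squared norm contributes only the noise term $\sigma^2\E[\|\mK\|^2]$, since $\vz$ contracted against $\mK$ produces $\E[\text{tr}(\mK(\sigma^2\mI+\mLambda)\mK^\top)]$; here I expect the $\mLambda$ pieces to combine so that, after accounting for the random $\mA$ inside $\mK$, the leading term is $\tfrac{\sigma^2}{n}\E[\|\mK\|^2]$. Subtracting the two risks and rearranging gives the claimed iff inequality.

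The main obstacle I anticipate is the \texttt{(PI)} computation, for two reasons. First, $\mA$ appears both as a design matrix (inside the random $\mK$, via $\mH_{a\perp}$ and $\mG_{x\perp}$) and as the source of the label noise $\vz$ through $(\mA-\mu(\mX))\vv^\star$; these are correlated, so I must be careful about what is conditioned on when taking expectations—the cleanest route is to condition on $\mA$ (so $\mK$ is fixed), take the inner expectation over $\varepsilon$ and over the fluctuation of $\vz$ given $\mA$, and only then take the outer $\E_{\mA}$, which is exactly why the bound carries an explicit $\E[\|\mK\|^2]$ rather than a closed form. Second, verifying that the deterministic (bias) part of the \texttt{(PI)} prediction collapses correctly requires the identities $\mH_{a\perp}\mX_{a\perp}=\mI$, $\mG_{x\perp}\mA_{x\perp}=\mI$ together with the orthogonality $\mX_{a\perp}^\top\mA=\vzero$-type relations; checking these Frisch--Waugh bookkeeping steps is where sign and projection errors are most likely.

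Finally, for the $m=1$ sufficient condition, I would take the general iff inequality and bound $\E[\|\mK\|^2]$ from above. With $\mA$ a single column, $\mH_{a\perp}$ and $\mG_{x\perp}$ become rank-one-type objects with denominators $\|(\mI-\mPi_x)\mA\|^2 = \|\mA_{x\perp}\|^2$, and a triangle/Cauchy--Schwarz bound on $\|\mK\|^2=\|\mX\mH_{a\perp}+\mu(\mX)\mG_{x\perp}\|^2$ should yield the numerator $\|\mPi_x\mA\|^2 + \|\mu(\mX)\|^2$ over that denominator, with the factor $2$ coming from the $\|u+v\|^2\le 2\|u\|^2+2\|v\|^2$ split. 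Dropping the nonnegative $\tfrac{\sigma^2 d}{n}$ term on the left (to make the condition sufficient rather than iff) and keeping it on the right then produces exactly the displayed inequality. This last part is routine once the general formula is in hand, so I would not belabour it.
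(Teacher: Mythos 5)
Your overall route is the same as the paper's: compute both risks exactly from the expanded-risk formula~(\ref{eq:expanded_risk}), compare, and for $m=1$ upper-bound $\E[\|\mK\|^2]$. Your \texttt{(NO-PI)} computation is correct and matches Section~\ref{sec:ols_no_marginalization}. The genuine gap is in the \texttt{(PI)} computation, and it matters because the claim is an \emph{if and only if}, which requires the exact identity $\E[\gR(\tau_\text{PI}(\mX))] = \frac{\sigma^2}{n}\E[\|\mK\|^2]$. You work with the decomposition $\vy = \mX\vw^\star + \mu(\mX)\vv^\star + \vz$ and anticipate (i) that the deterministic part collapses because $\mK\mu(\mX)\vv^\star$ "reproduces" $\mu(\mX)\vv^\star$, and (ii) that the noise contributes $\E[\text{tr}(\mK(\sigma^2\mI+\mLambda)\mK^\top)]$, with the $\mLambda$ pieces somehow combining into $\frac{\sigma^2}{n}\E[\|\mK\|^2]$. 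Both anticipations are false: $\mK\mu(\mX) \neq \mu(\mX)$ in general (the identities of Lemma~\ref{lem:relation_hx_ha_ga_gx} hold for $\mA$, not for $\mu(\mX)$), and the trace formula is ill-founded because $\mK$ and $\vz$ are both functions of $\mA$ (conditionally on $\mA$, $\vz$ is \emph{not centered}; its mean is $(\mA-\mu(\mX))\vv^\star$). The correct mechanism, which is the heart of the paper's proof, is an exact algebraic cancellation: substituting $\vy = \mX\vw^\star + \mA\vv^\star + \rvvarepsilon$ and using $\mH_{a\perp}\mX=\mI$, $\mH_{a\perp}\mA=\vzero$, $\mG_{x\perp}\mA=\mI$, $\mG_{x\perp}\mX=\vzero$ gives $\hat{\vw}_1 = \vw^\star + \mH_{a\perp}\rvvarepsilon$ and $\hat{\vv}_1 = \vv^\star + \mG_{x\perp}\rvvarepsilon$; the joint least squares fits the $\mA\vv^\star$ component of the signal \emph{exactly}, so the residual is precisely $-\mK\rvvarepsilon$ and \emph{no} $\mLambda$ term ever appears (equivalently, in your decomposition, the conditional mean $\mK(\mA-\mu(\mX))\vv^\star$ cancels the deterministic mismatch $(\mI-\mK)\mu(\mX)\vv^\star$). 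If a $\text{tr}(\mK\mLambda\mK^\top)$ term survived, as your formula suggests, the stated equivalence would be false; its complete absence is also the substantive point of the proposition — the PI variance $\mLambda$ penalizes only the NO-PI estimator, through $\frac{1}{n}\text{tr}(\mPi_x\mLambda)$.

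A second, smaller gap is the $m=1$ bookkeeping. The bound you need (the paper's Lemma~\ref{lem:upperbound_norm_k}) is $\E[\|\mK\|^2] \le 2d + 2\,\E\big[ (\|\mPi_x\mA\|^2 + \|\mu(\mX)\|^2) / \|(\mI-\mPi_x)\mA\|^2 \big]$, and the additive $2d$ is not produced by a triangle/Cauchy--Schwarz split alone: it comes from evaluating $\|\mX\mH_{a\perp}\|^2 = \text{tr}\big((\mX^\top\mX)(\mX_{a\perp}^\top\mX_{a\perp})^{-1}\big) = d + \|\mPi_x\mA\|^2/\|(\mI-\mPi_x)\mA\|^2$ via the Sherman--Morrison formula. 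The $\frac{\sigma^2 d}{n}$ on the right of the sufficient condition is what remains after the $\frac{\sigma^2 d}{n}$ on the left cancels one of the two copies of $\frac{\sigma^2 d}{n}$ arising from this $2d$; it is not obtained by "dropping the nonnegative term on the left and keeping it on the right." Without the Sherman--Morrison step your bound has no $d$ term and the arithmetic does not reproduce the displayed inequality.
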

We provide the details of the derivation of the risk for $\tau_\text{NO-PI}$ and $\tau_\text{PI}$ in Section~\ref{sec:ols_no_marginalization} and Section~\ref{sec:ols_mi_no_marginalization} respectively.
Moreover, the second part of the proposition follows from an application of Lemma~\ref{lem:upperbound_norm_k}.

\subsubsection{Ordinary least squares (no marginalization)}\label{sec:ols_no_marginalization}
The solution of
$$
\min_{\vw} \frac{1}{2} \| \vy - \mX \vw \|^2
$$
is given by $\hat{\vw}_0 = (\mX^\top \mX)^{-1} \mX^\top \vy = \mH \vy$. The corresponding predictions are
$$
\tau_\text{NO-PI}(\mX) = \mX \hat{\vw}_0 = \mPi_x \vy = \mX \vw^\star + \mPi_x \mu(\mX) \vv^\star + \mPi_x \vz.
$$
Plugging into~(\ref{eq:expanded_risk}), we obtain
$$
\gR(\tau_\text{NO-PI}(\mX)) = \frac{1}{n} \| (\mI- \mPi_x) \mu(\mX) \vv^\star  - \mPi_x \vz\|^2.
$$
Expanding the square and using that $\text{tr}(\mPi_x)=d$, the final risk expression is
\begin{eqnarray}
    \E[\gR(\tau_\text{NO-PI}(\mX))] & = & \frac{1}{n} \| (\mI- \mPi_x) \mu(\mX) \vv^\star\|^2 + \frac{1}{n} \E[\|\mPi_x \vz\|^2] \nonumber \\
    & = & \frac{1}{n} \| (\mI- \mPi_x) \mu(\mX) \vv^\star\|^2 + \frac{\sigma^2 d}{n} + \frac{1}{n} \text{tr}(\mPi_x \mLambda). \label{eq:risk_ols}
\end{eqnarray}

\subsubsection{Ordinary least squares with PI and mean imputation (no marginalization)}\label{sec:ols_mi_no_marginalization}
We focus on the solution of
$$
\min_{\vw, \vv} \frac{1}{2} \| \vy - \mX \vw - \mA \vv \|^2
$$ to construct an estimator. Using Lemma~\ref{lem:ols_w_v}, we have
$$
\hat{\vw}_1 = \mH_{a\perp} \vy 
\ \ \
\text{and}
\ \ \
\hat{\vv}_1 = \mG_{x\perp} \vy.
$$
Using Lemma~\ref{lem:relation_hx_ha_ga_gx}, we can simplify
$$
\hat{\vw}_1 = \mH_{a\perp} \vy = \vw^\star + \vzero +  \mH_{a\perp} \rvvarepsilon
$$and
$$
\hat{\vv}_1 = \mG_{x\perp} \vy = \vzero + \vv^\star +  \mG_{x\perp} \rvvarepsilon.
$$
Since $\mA$ is not available at prediction time, we impute it instead with its mean $\mu(\mX)$, which is assumed to be perfectly known. This leads to
$$
\tau_\text{PI}(\mX) = \mX \hat{\vw}_1 + \mu(\mX)  \hat{\vv}_1 = \mX \vw^\star + \mu(\mX) \vv^\star + \mK \rvvarepsilon,
$$
with 
$$
\mK = \mX \mH_{a\perp} + \mu(\mX) \mG_{x\perp}.
$$
Plugging into~(\ref{eq:expanded_risk}) and taking the expectation, we obtain
\begin{eqnarray}
    \E[\gR(\tau_\text{PI}(\mX))] & = & \frac{1}{n} \| \vzero\|^2 + \frac{1}{n} \E[\|\mK \rvvarepsilon\|^2] \nonumber \\
    & = & \frac{\sigma^2}{n} \E[\|\mK \|^2]. \label{eq:risk_mi}
\end{eqnarray}

\subsection{Capturing the benefit of PI with marginalization}

We then describe when, with marginalization, ordinary least squares ignoring PI is worse than ordinary least squares using PI.

\begin{proposition} Assume that $\mX^\top \mX$ is invertible. Moreover, assume that $\mA^\top \mA$ and $[\mX, \mA]^\top [\mX, \mA]$ are almost surely invertible. We have that 
$$
\E[\gR(\tau_\text{marg.~NO-PI}(\mX))] > \E[\gR(\tau_\text{marg.~PI}(\mX))]
$$
if and only if
$$
\frac{1}{n} \| (\mI- \mPi_x) \mu(\mX) \vv^\star\|^2 + \frac{\sigma^2 d}{n} > \frac{\sigma^2}{n} \|\E[ \mL ]\|^2
$$
with $\mL = \mX \mH_{a\perp} + \mA \mG_{x\perp}$. When $m=1$ (i.e., $\mA$ is a column vector), a sufficient condition is
$$
\frac{1}{n} \| (\mI- \mPi_x) \mu(\mX) \vv^\star\|^2 > 2 \E\bigg[ \frac{\|\mPi_x \mA\|^2 + \|\mA\|^2}{\|(\mI - \mPi_x) \mA\|^2} \bigg] + \frac{\sigma^2 d}{n}.
$$
\end{proposition}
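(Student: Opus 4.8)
The plan is to follow the same template as the non-marginalized Proposition whose risks are derived in Sections~\ref{sec:ols_no_marginalization}--\ref{sec:ols_mi_no_marginalization}: derive closed forms for $\E[\gR(\tau_\text{marg.~NO-PI}(\mX))]$ and $\E[\gR(\tau_\text{marg.~PI}(\mX))]$ and then read off the ``if and only if'' by direct subtraction. The one new ingredient is that each predictor now marginalizes its estimator over the training PI, i.e.\ applies $\E_{\mA}[\cdot]$ before the squared error is formed in the expanded risk~(\ref{eq:expanded_risk}). Throughout I would use $\E_{\mA}[\mA] = \mu(\mX)$ and keep $\rvvarepsilon$ fixed while marginalizing over $\mA$, so that the marginalized predictors become $\mA$-free and the outer expectation reduces to $\E_{\rvvarepsilon}$ taken only at the end.

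For $\tau_\text{marg.~NO-PI}$ I would start from $\hat{\vw}_0 = \mH \vy$ and form $\tau_\text{marg.~NO-PI}(\mX) = \mX\, \E_{\mA}[\hat{\vw}_0]$. Writing $\vy = \mX \vw^\star + \mu(\mX)\vv^\star + \vz$ and noting that $\E_{\mA}[\vz] = \E_{\mA}[(\mA - \mu(\mX))\vv^\star + \rvvarepsilon] = \rvvarepsilon$, the marginalized predictor collapses to $\mX \vw^\star + \mPi_x \mu(\mX)\vv^\star + \mPi_x \rvvarepsilon$. Plugging into~(\ref{eq:expanded_risk}) isolates the bias $(\mI - \mPi_x)\mu(\mX)\vv^\star$ and the noise $\mPi_x \rvvarepsilon$; taking $\E_{\rvvarepsilon}$ and using $\text{tr}(\mPi_x) = d$ gives $\tfrac{1}{n}\|(\mI - \mPi_x)\mu(\mX)\vv^\star\|^2 + \tfrac{\sigma^2 d}{n}$. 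The decisive difference from the non-marginalized risk~(\ref{eq:risk_ols}) is that marginalizing over $\mA$ has replaced $\vz \sim \gN(\vzero, \sigma^2 \mI + \mLambda)$ by $\rvvarepsilon \sim \gN(\vzero, \sigma^2 \mI)$, so the PI-variance contribution $\tfrac{1}{n}\text{tr}(\mPi_x \mLambda)$ drops out — this is the qualitative message of the marginalized case.

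For $\tau_\text{marg.~PI}$ I would reuse the estimators of Section~\ref{sec:ols_mi_no_marginalization}, namely $\hat{\vw}_1 = \vw^\star + \mH_{a\perp}\rvvarepsilon$ and $\hat{\vv}_1 = \vv^\star + \mG_{x\perp}\rvvarepsilon$ (Lemmas~\ref{lem:ols_w_v} and~\ref{lem:relation_hx_ha_ga_gx}), so that $\mX\hat{\vw}_1 + \mA\hat{\vv}_1 = \mX\vw^\star + \mA\vv^\star + \mL\rvvarepsilon$ with $\mL = \mX \mH_{a\perp} + \mA \mG_{x\perp}$. Marginalizing yields $\tau_\text{marg.~PI}(\mX) = \mX\vw^\star + \mu(\mX)\vv^\star + \E[\mL]\rvvarepsilon$, whose bias cancels exactly in~(\ref{eq:expanded_risk}), leaving only $-\E[\mL]\rvvarepsilon$; taking $\E_{\rvvarepsilon}$ gives $\tfrac{\sigma^2}{n}\|\E[\mL]\|^2$. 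Subtracting the two risk expressions makes the stated iff immediate.

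For the $m = 1$ sufficient condition I would replace $\|\E[\mL]\|^2$ by a tractable upper bound: convexity of the squared Frobenius norm gives $\|\E[\mL]\|^2 \le \E[\|\mL\|^2]$, and then I would invoke Lemma~\ref{lem:upperbound_norm_k} exactly as in the first Proposition but with the second block $\mu(\mX)$ replaced by $\mA$. For $m = 1$ the matrix $\mA \mG_{x\perp}$ is rank one with $\|\mA \mG_{x\perp}\|^2 = \|\mA\|^2 / \|(\mI - \mPi_x)\mA\|^2$, while the analogous computation for $\mX \mH_{a\perp}$ produces a $d$-type term together with $\|\mPi_x \mA\|^2 / \|(\mI - \mPi_x)\mA\|^2$; combining them through $\|u + v\|^2 \le 2\|u\|^2 + 2\|v\|^2$ and recombining with the $\tfrac{\sigma^2 d}{n}$ already present in the $\tau_\text{marg.~NO-PI}$ risk yields the stated bound with its $2\,\E[(\|\mPi_x \mA\|^2 + \|\mA\|^2)/\|(\mI - \mPi_x)\mA\|^2]$ and $\tfrac{\sigma^2 d}{n}$ terms. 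I expect the main obstacle to be bookkeeping rather than conceptual — verifying cleanly that the $\mA$-marginalization annihilates the $\tfrac{1}{n}\text{tr}(\mPi_x \mLambda)$ term, and tracking the constants along the $m=1$ chain so that the resulting condition genuinely implies the iff.
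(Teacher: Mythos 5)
Your proposal matches the paper's own proof essentially step for step: the same marginalized-risk derivations (replacing $\vz$ by $\rvvarepsilon$ for \texttt{marg.~NO-PI}, and $\mA\vv^\star$ by $\mu(\mX)\vv^\star$ plus $\E[\mL]\rvvarepsilon$ for \texttt{marg.~PI}), the same direct subtraction for the ``if and only if,'' and the same Jensen-then-Sherman--Morrison route (Lemma~\ref{lem:upperbound_norm_k} with $\mu(\mX)$ replaced by $\mA$) for the $m=1$ condition. The constant-tracking caveat you flag at the end (the $\sigma^2/n$ factor on the expectation term in the stated sufficient condition) is a looseness present in the paper's own statement as well, so it is not a gap introduced by your argument.
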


We provide the details of the derivation of the risk for $\tau_\text{marg.~NO-PI}$ and $\tau_\text{marg.~PI}$ in Section~\ref{sec:ols_with_marginalization} and Section~\ref{sec:ols_pi_with_marginalization} respectively.
Moreover, the second part of the proposition follows from an application of Lemma~\ref{lem:upperbound_norm_k}.

\subsubsection{Ordinary least squares (with marginalization)}\label{sec:ols_with_marginalization}
Restarting from Section~\ref{sec:ols_no_marginalization}, we consider the predictions marginalized with respect to $\mA$. We have
$$
\tau_\text{marg.~NO-PI}(\mX) = \E_{\va \sim p(\va|\vx)}[ \mX \hat{\vw}_0] =  \mX \vw^\star + \mPi_x \mu(\mX) \vv^\star + \mPi_x \rvvarepsilon.
$$
Plugging into~(\ref{eq:expanded_risk}), we obtain
$$
\gR(\tau_\text{marg.~NO-PI}(\mX)) = \frac{1}{n} \| (\mI- \mPi_x) \mu(\mX) \vv^\star  - \mPi_x \rvvarepsilon\|^2.
$$
Expanding the square and using that $\text{tr}(\mPi_x)=d$, the final risk expression is
\begin{eqnarray}
    \E[\gR(\tau_\text{marg.~NO-PI}(\mX))] & = & \frac{1}{n} \| (\mI- \mPi_x) \mu(\mX) \vv^\star\|^2 + \frac{1}{n} \E[\|\mPi_x \rvvarepsilon \|^2] \nonumber \\
    & = & \frac{1}{n} \| (\mI- \mPi_x) \mu(\mX) \vv^\star\|^2 + \frac{\sigma^2 d}{n}. \label{eq:risk_ols_with_marginalization}
\end{eqnarray}

\subsubsection{Ordinary least squares with PI and marginalization}\label{sec:ols_pi_with_marginalization}
Restarting from Section~\ref{sec:ols_mi_no_marginalization}, we consider the predictions marginalized with respect to $\mA$. In particular, we do not impute $\mA$ by its mean but rather directly take the expectation over $\mA$. We have
$$
\tau_\text{marg.~PI}(\mX) = \E_{\va \sim p(\va|\vx)}[\mX \hat{\vw}_1 + \mA  \hat{\vv}_1] = \mX \vw^\star + \mu(\mX) \vv^\star + \E_{\va \sim p(\va|\vx)}[\mL] \rvvarepsilon,
$$
with 
$$
\mL = \mX \mH_{a\perp} + \mA \mG_{x\perp}.
$$
Plugging into~(\ref{eq:expanded_risk}) and taking the expectation, we obtain
\begin{eqnarray}
    \E[\gR(\tau_\text{marg.~PI}(\mX))] & = & \frac{1}{n} \| \vzero\|^2 + \frac{1}{n} \E[\|\E_{\va \sim p(\va|\vx)}[\mL] \rvvarepsilon\|^2] \nonumber \\
    & = & \frac{\sigma^2}{n} \| \E_{\va \sim p(\va|\vx)}[\mL]  \|^2. \label{eq:risk_pi_with_marginalization}
\end{eqnarray}

\subsection{Technical lemmas}

\begin{lemma}\label{lem:ols_w_v} Assume that both $\mX^\top \mX$ and $\mA^\top \mA$ are invertible. Moreover, assume that both 
$\mX_{a\perp}^\top \mX_{a\perp}$ and $\mA_{x\perp}^\top \mA_{x\perp}$ are invertible.

We can write the solution of
$$
\min_{\vw, \vv} \frac{1}{2} \| \vy - \mX \vw - \mA \vv \|^2
$$ as
$$
\hat{\vw} = \mH_{a\perp} \vy
\ \ \
\text{and}
\ \ \
\hat{\vv} = \mG_{x\perp} \vy.
$$
\end{lemma}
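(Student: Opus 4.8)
The plan is to derive and solve the normal equations of the joint least-squares problem and then reduce them to the stated decoupled form by ``partialling out'' one block of variables; this is exactly the Frisch--Waugh--Lovell construction. First I would write the first-order optimality conditions for $\min_{\vw,\vv}\tfrac12\|\vy-\mX\vw-\mA\vv\|^2$. Setting the gradients in $\vw$ and $\vv$ to zero gives the two coupled normal equations
\begin{align*}
\mX^\top(\vy-\mX\hat\vw-\mA\hat\vv) &= \vzero,\\
\mA^\top(\vy-\mX\hat\vw-\mA\hat\vv) &= \vzero.
\end{align*}

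Next I would eliminate $\hat\vv$. From the second equation, invertibility of $\mA^\top\mA$ lets me solve $\hat\vv=(\mA^\top\mA)^{-1}\mA^\top(\vy-\mX\hat\vw)$, so that $\mA\hat\vv=\mPi_a(\vy-\mX\hat\vw)$. Substituting this into the first equation removes $\hat\vv$ and leaves the single block equation $\mX^\top(\mI-\mPi_a)(\vy-\mX\hat\vw)=\vzero$.

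The key algebraic step exploits that $\mI-\mPi_a$ is symmetric and idempotent. Writing $\mX_{a\perp}=(\mI-\mPi_a)\mX$, symmetry gives $\mX^\top(\mI-\mPi_a)=\mX_{a\perp}^\top$, which collapses the equation to $\mX_{a\perp}^\top(\vy-\mX\hat\vw)=\vzero$; idempotency gives $\mX_{a\perp}^\top\mX=\mX^\top(\mI-\mPi_a)\mX=\mX_{a\perp}^\top\mX_{a\perp}$, so the equation becomes $\mX_{a\perp}^\top\vy=\mX_{a\perp}^\top\mX_{a\perp}\hat\vw$. Invertibility of $\mX_{a\perp}^\top\mX_{a\perp}$ then yields $\hat\vw=\mH_{a\perp}\vy$. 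Since the objective is symmetric in the roles of $(\mX,\vw)$ and $(\mA,\vv)$, the identical argument with $\mX$ and $\mA$ interchanged (partialling out $\mX$ instead) gives $\hat\vv=\mG_{x\perp}\vy$.

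I expect the main obstacle to be the projector bookkeeping rather than any conceptual difficulty: one must use symmetry of $\mI-\mPi_a$ to recognize $\mX^\top(\mI-\mPi_a)=\mX_{a\perp}^\top$ and idempotency to fold $\mX_{a\perp}^\top\mX$ into $\mX_{a\perp}^\top\mX_{a\perp}$, so that the cross term between the two blocks disappears and the coupled system decouples into the two residualized single-block regressions. Everything else follows routinely from the stated invertibility assumptions on $\mX^\top\mX$, $\mA^\top\mA$, $\mX_{a\perp}^\top\mX_{a\perp}$, and $\mA_{x\perp}^\top\mA_{x\perp}$.
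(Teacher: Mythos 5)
Your proof is correct and is in substance the same as the paper's: the paper proves the lemma by invoking the block-matrix inversion formula for the normal-equation matrix $\mQ = [\mX, \mA]^\top [\mX, \mA]$, whose two Schur complements are exactly $\mX_{a\perp}^\top \mX_{a\perp}$ and $\mA_{x\perp}^\top \mA_{x\perp}$, and your elimination of $\hat{\vv}$ from the coupled normal equations is precisely the block elimination that produces those Schur complements. The difference is only presentational: you carry out the Frisch--Waugh--Lovell reduction explicitly (using symmetry and idempotency of $\mI - \mPi_a$ to collapse $\mX^\top(\mI-\mPi_a)\mX$ into $\mX_{a\perp}^\top\mX_{a\perp}$), whereas the paper cites the inversion formula as known, so your write-up is self-contained but not a genuinely different route.
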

\begin{proof}
The proof follows by applying inversion formula for the block matrix
$$
\mQ = 
\begin{bmatrix}
\mX^\top \mX\ \ \ \mX^\top \mA \\
\mA^\top \mX\ \ \ \mA^\top \mA
\end{bmatrix}
$$
where $\mX_{a\perp}^\top \mX_{a\perp}$ and $\mA_{x\perp}^\top \mA_{x\perp}$ are the two Schur complements of $\mX^\top \mX$ and $\mA^\top \mA$. Under the assumptions of the lemma, the matrix is $\mQ$ is invertible.
\end{proof}

\begin{lemma}\label{lem:relation_hx_ha_ga_gx} We have the following properties
\begin{itemize}
    \item $\mH_{a\perp} \mX = (\mX_{a\perp}^\top \mX_{a\perp})^{-1} \mX^\top  (\mI - \mPi_a) \mX = (\mX_{a\perp}^\top \mX_{a\perp})^{-1} (\mX_{a\perp}^\top \mX_{a\perp}) = \mI$,
    \item $\mH_{a\perp} \mA = (\mX_{a\perp}^\top \mX_{a\perp})^{-1} \mX^\top  (\mI - \mPi_a) \mA = \vzero$.
\end{itemize}
Conversely, we have
\begin{itemize}
    \item $\mG_{x\perp} \mA = (\mA_{x\perp}^\top \mA_{x\perp})^{-1} \mA^\top  (\mI - \mPi_x) \mA = (\mA_{x\perp}^\top \mA_{x\perp})^{-1} (\mA_{x\perp}^\top \mA_{x\perp}) = \mI$,
    \item $\mG_{x\perp} \mX = (\mA_{x\perp}^\top \mA_{x\perp})^{-1} \mA^\top  (\mI - \mPi_x) \mX = \vzero$.
\end{itemize}
\end{lemma}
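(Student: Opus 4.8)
The plan is to derive all four identities directly from the algebra of orthogonal projectors, exploiting only three elementary facts: that each orthogonal projector $\mPi$ is symmetric ($\mPi^\top = \mPi$) and idempotent ($\mPi^2 = \mPi$), so $\mI - \mPi$ is likewise symmetric and idempotent; and that a projector fixes its own column space, e.g.\ $\mPi_a \mA = \mA (\mA^\top \mA)^{-1} \mA^\top \mA = \mA$. Since the two ``conversely'' statements are obtained from the first two by swapping the roles of $\mX$ and $\mA$ (which interchanges $\mPi_x \leftrightarrow \mPi_a$, $\mX_{a\perp} \leftrightarrow \mA_{x\perp}$, and $\mH_{a\perp} \leftrightarrow \mG_{x\perp}$), I would prove only the two properties involving $\mH_{a\perp}$ and then invoke this symmetry verbatim for $\mG_{x\perp}$.

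For $\mH_{a\perp} \mX = \mI$, I would unfold the definition $\mH_{a\perp} = (\mX_{a\perp}^\top \mX_{a\perp})^{-1} \mX_{a\perp}^\top$ and use $\mX_{a\perp}^\top = \mX^\top (\mI - \mPi_a)$ (symmetry of $\mPi_a$) to obtain the middle expression $\mX_{a\perp}^\top \mX = \mX^\top (\mI - \mPi_a) \mX$ stated in the lemma. The crux is then to recognize this quantity as $\mX_{a\perp}^\top \mX_{a\perp}$: because $\mI - \mPi_a$ is symmetric and idempotent, $\mX_{a\perp}^\top \mX_{a\perp} = \mX^\top (\mI - \mPi_a)^\top (\mI - \mPi_a) \mX = \mX^\top (\mI - \mPi_a) \mX$, so the factor following the inverse is exactly $\mX_{a\perp}^\top \mX_{a\perp}$ and cancels it, leaving $\mI$. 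For $\mH_{a\perp} \mA = \vzero$, I would note that the fixing property gives $(\mI - \mPi_a)\mA = \mA - \mPi_a \mA = \vzero$, so the entire product collapses to the zero matrix regardless of the leading inverse factor.

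There is no real obstacle here; this is a routine verification once the projector identities are laid out. The single point deserving a moment of care is the identification $\mX^\top (\mI - \mPi_a)\mX = \mX_{a\perp}^\top \mX_{a\perp}$, which rests squarely on the idempotence of $\mI - \mPi_a$ and is also what guarantees the displayed inverse is well-posed — the requisite invertibility being precisely the hypothesis already assumed in Lemma~\ref{lem:ols_w_v} and carried over to the present setting.
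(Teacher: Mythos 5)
Your proposal is correct and matches the paper's approach exactly: the paper offers no separate proof beyond the chain of equalities displayed in the lemma itself, which rest on the same three facts you identify — symmetry and idempotence of $\mI - \mPi_a$ (giving $\mX^\top(\mI-\mPi_a)\mX = \mX_{a\perp}^\top \mX_{a\perp}$) and the fixing property $\mPi_a \mA = \mA$ (giving $(\mI-\mPi_a)\mA = \vzero$), with the $\mG_{x\perp}$ statements following by the same argument with $\mX$ and $\mA$ interchanged. Your write-up simply makes these implicit steps explicit.
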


\begin{lemma}\label{lem:upperbound_norm_k} Assume $m=1$, i.e., $\mA$ is a column vector.
We have
$$
\E[ \| \mK \|^2 ] \leq 2d + 2 \E\bigg[ \frac{\|\mPi_x \mA\|^2 + \|\mu(\mX)\|^2}{\|(\mI - \mPi_x) \mA\|^2} \bigg].
$$
Similarly, it holds that
$$
 \| \E[\mL ] \|^2  \leq 2d + 2 \E\bigg[ \frac{\|\mPi_x \mA\|^2 + \|\mA\|^2}{\|(\mI - \mPi_x) \mA\|^2} \bigg].
$$
\end{lemma}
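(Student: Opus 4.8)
The plan is to first pin down that the matrix norm $\|\cdot\|$ in the statement is the Frobenius norm: this is exactly how it enters the risk expressions, since $\E_{\rvvarepsilon}[\|\mK \rvvarepsilon\|^2] = \sigma^2 \Tr(\mK^\top \mK) = \sigma^2 \|\mK\|_F^2$. The whole proof then hinges on the drastic simplification from $m=1$, under which $\mG_{x\perp} = \mA_{x\perp}^\top/\|\mA_{x\perp}\|^2$ is a \emph{row} vector and $\mu(\mX)$, $\mA$ are \emph{column} vectors, so that two of the three building blocks become rank one and admit closed forms.

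First I would split each norm with the elementary bound $\|\mB + \mC\|_F^2 \le 2\|\mB\|_F^2 + 2\|\mC\|_F^2$, applied to $\mK = \mX\mH_{a\perp} + \mu(\mX)\mG_{x\perp}$ and to $\mL = \mX\mH_{a\perp} + \mA\mG_{x\perp}$. The trailing terms are rank one and immediate: $\|\mu(\mX)\mG_{x\perp}\|_F^2 = \|\mu(\mX)\|^2/\|(\mI-\mPi_x)\mA\|^2$ and $\|\mA\mG_{x\perp}\|_F^2 = \|\mA\|^2/\|(\mI-\mPi_x)\mA\|^2$, which supply the $\|\mu(\mX)\|^2$ and $\|\mA\|^2$ numerators distinguishing the two inequalities. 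For the $\mL$ bound I would additionally push the expectation inside at the outset via Jensen, $\|\E[\mL]\|_F^2 \le \E[\|\mL\|_F^2]$, and bound $\|\mL\|_F^2$ pointwise thereafter; the $\mK$ bound needs no such step, being just the expectation of a pointwise inequality.

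The substantive and, I expect, hardest term is the shared block $\|\mX\mH_{a\perp}\|_F^2$, which is tricky precisely because it naively involves $\mPi_a$ whereas the target bound only features $\mPi_x$ acting on $\mA$. Here I would write $\mX = \mX_{a\perp} + \mPi_a\mX$, so that $\mX\mH_{a\perp} = \mPi_{x_{a\perp}} + \mPi_a\mX\mH_{a\perp}$, where $\mPi_{x_{a\perp}} = \mX_{a\perp}\mH_{a\perp}$ is the orthogonal projector onto the range of $\mX_{a\perp}$. Expanding the Frobenius norm, the cross term vanishes because $\mPi_{x_{a\perp}}\mPi_a = \vzero$ (indeed $\mX_{a\perp}^\top\mPi_a = \mX^\top(\mI-\mPi_a)\mPi_a = \vzero$), leaving $\|\mX\mH_{a\perp}\|_F^2 = \|\mPi_{x_{a\perp}}\|_F^2 + \|\mPi_a\mX\mH_{a\perp}\|_F^2$; the first summand equals $\Tr(\mPi_{x_{a\perp}}) = d$, since $\mX_{a\perp}^\top\mX_{a\perp}$ is invertible (it is the Schur complement of the invertible $[\mX,\mA]^\top[\mX,\mA]$) and so $\mX_{a\perp}$ has full column rank $d$.

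The final rank-one piece $\|\mPi_a\mX\mH_{a\perp}\|_F^2 = \|\mH_{a\perp}^\top\mX^\top\mA\|^2/\|\mA\|^2$ (using $\mPi_a = \mA\mA^\top/\|\mA\|^2$ for $m=1$) is where the reduction pays off. I would apply Sherman–Morrison to $(\mX_{a\perp}^\top\mX_{a\perp})^{-1} = (\mX^\top\mX - (\mX^\top\mA)(\mX^\top\mA)^\top/\|\mA\|^2)^{-1}$, noting that the $\|\mA\|^2$ factors cancel and the effective denominator collapses to $\|\mA\|^2 - \mA^\top\mPi_x\mA = \|(\mI-\mPi_x)\mA\|^2$. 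A short computation then gives $\mH_{a\perp}^\top\mX^\top\mA = \mPi_x\mA - \frac{\|\mPi_x\mA\|^2}{\|(\mI-\mPi_x)\mA\|^2}(\mI-\mPi_x)\mA$, and orthogonality of $\mPi_x\mA$ and $(\mI-\mPi_x)\mA$ yields $\|\mPi_a\mX\mH_{a\perp}\|_F^2 = \|\mPi_x\mA\|^2/\|(\mI-\mPi_x)\mA\|^2$. Hence $\|\mX\mH_{a\perp}\|_F^2 = d + \|\mPi_x\mA\|^2/\|(\mI-\mPi_x)\mA\|^2$, and combining this with the rank-one contributions and the factor-two split (then taking expectations, with the extra Jensen step for $\mL$) delivers both displayed inequalities at once.
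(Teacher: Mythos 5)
Your proof is correct and follows essentially the same route as the paper's: the split $\|\mB+\mC\|^2 \le 2\|\mB\|^2+2\|\mC\|^2$, exact evaluation of the $\mG_{x\perp}$ terms using the $m=1$ rank-one structure, Sherman--Morrison for the $\mX\mH_{a\perp}$ term, and Jensen's inequality to handle $\|\E[\mL]\|^2$. The only difference is bookkeeping in the middle step: the paper evaluates $\|\mX\mH_{a\perp}\|^2$ as $\Tr\big((\mX^\top\mX)(\mX_{a\perp}^\top\mX_{a\perp})^{-1}\big)$ via trace cyclicity and applies Sherman--Morrison to that inverse, whereas you decompose $\mX\mH_{a\perp}$ into the orthogonal projector $\mX_{a\perp}\mH_{a\perp}$ plus the orthogonal piece $\mPi_a\mX\mH_{a\perp}$ and apply Sherman--Morrison only to the latter; both computations land on the same identity $d+\|\mPi_x\mA\|^2/\|(\mI-\mPi_x)\mA\|^2$.
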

\begin{proof}
We start by splitting the term into
$$
\| \mK \|^2 \leq 2 \| \mX \mH_{a\perp} \|^2 + 2 \| \mu(\mX) \mG_{x\perp} \|^2.
$$
Notice that $\mH_{a\perp} \mH_{a\perp}^\top = (\mX_{a\perp}^\top \mX_{a\perp})^{-1}$ and
similarly $\mG_{x\perp} \mG_{x\perp}^\top =  (\mA_{x\perp}^\top \mA_{x\perp})^{-1}$.

Since $\| \mM\|^2 = \text{tr}(\mM^\top \mM)$, we have
$$
\| \mK \|^2 \leq 2 \text{tr}( (\mX^\top \mX) (\mX_{a\perp}^\top \mX_{a\perp})^{-1}) + 
2 \text{tr}( \mu(\mX)^\top \mu(\mX) (\mA_{x\perp}^\top \mA_{x\perp})^{-1}).
$$
By definition of $\mA_{x\perp}$, when $m=1$, we have
$$
(\mA_{x\perp}^\top \mA_{x\perp})^{-1} = \frac{1}{\|(\mI - \mPi_x) \mA\|^2}.
$$
For the term $(\mX_{a\perp}^\top \mX_{a\perp})^{-1}$, the Sherman–Morrison formula leads to 
$$
(\mX_{a\perp}^\top \mX_{a\perp})^{-1} = (\mX^\top \mX)^{-1} +  \frac{1}{1-\vb^\top (\mX^\top \mX)^{-1} \vb} (\mX^\top \mX)^{-1} \vb \vb^\top (\mX^\top \mX)^{-1}
$$
with $\vb = 1/\|\mA\| \cdot \mX^\top \mA \in \R^d$. Further simplifying, we obtain
$$
\text{tr}( (\mX^\top \mX) (\mX_{a\perp}^\top \mX_{a\perp})^{-1}) = \text{tr}\Big(\mI + \frac{\mPi_x \mA \mA^\top \mPi_x}{\|\mA\|^2 - \|\mPi_x \mA\|^2}\Big)=d + \frac{\|\mPi_x \mA\|^2}{\|(\mI - \mPi_x) \mA\|^2}.
$$

For the second part of the proof, we start by applying Jensen inequality:
$$
\| \E[\mL ] \|^2 \leq \E[ \| \mL  \|^2 ].
$$
The rest of the proof follows along the same arguments, replacing $\mu(\mX)$ by $\mA$.
\end{proof}

\end{document}